\lstdefinestyle{mystyle}{
    commentstyle=\color{OliveGreen},
    keywordstyle=\color{BurntOrange},
    numberstyle=\tiny\color{black!60},
    stringstyle=\color{MidnightBlue},
    basicstyle=\ttfamily,
    breakatwhitespace=false,
    breaklines=true,
    captionpos=b,
    keepspaces=true,
    numbers=left,
    numbersep=5pt,
    showspaces=false,
    showstringspaces=false,
    showtabs=false,
    tabsize=2
}
\tikzset{
    -Latex,auto,node distance =1 cm and 1 cm,semithick,
    state/.style ={circle, draw, minimum width = 0.7 cm},
    detstate/.style ={rectangle, draw, minimum width = 0.7 cm, minimum height = 0.7 cm},
    point/.style = {circle, draw, inner sep=0.04cm,fill,node contents={}},
    bidirected/.style={Latex-Latex,dashed},
    el/.style = {inner sep=2pt, align=left, sloped}
}
\newacronym{mcmc}{\textsc{mcmc}}{Markov chain Monte Carlo}
\newacronym{gmm}{\textsc{gmm}}{Gaussian mixture model}
\newacronym{ctgm}{\textsc{ctgm}}{continuous-time generative model}
\newacronym{dbgm}{\textsc{dbgm}}{diffusion-based generative model}
\newacronym{em}{\textsc{em}}{Euler-Maruyama}
\newacronym{ddpm}{\textsc{ddpm}}{denoising diffusion probabilistic model}
\newacronym{sde}{\textsc{sde}}{stochastic differential equation}
\newacronym{elbo}{\textsc{elbo}}{variational lower bound}
\newacronym{ct-elbo}{\textsc{ct-elbo}}{continuous-time evidence lower bound}
\newacronym{nhl}{\textsc{nhl}}{Nose-Hoover Langevin}
\newacronym{kl}{\textsc{kl}}{Kullback-Leibler}
\newacronym{bpd}{\textsc{bpd}}{bits-per-dim}
\newacronym{nfe}{\textsc{nfe}}{number of function evaluations}
\newacronym{vae}{\textsc{vae}}{variational autoencoder}
\newacronym{mdm}{\textsc{mdm}}{multivariate diffusion model}
\newacronym{amdt}{amdt}{Automatic Multivariate Diffusion Training}
\newacronym{ode}{\textsc{ode}}{ordinary differential equation}
\newacronym{cld}{\textsc{cld}}{critically-damped langevin diffusion}
\newacronym{vpsde}{\textsc{vpsde}}{variance-preserving stochastic differential equation}
\newacronym{vesde}{\textsc{vesde}}{variance-exploding stochastic differential equation}
\newacronym{dsm}{\textsc{dsm}}{denoising score matching}
\newacronym{ism}{\textsc{ism}}{implicit score matching}
\newacronym{iwae}{\textsc{iwae}}{importance-weighted auto-encoder}
\newacronym{mmd}{\textsc{mmd}}{maximum mean discrepancy}
\newacronym{pf-ode}{\textsc{pf-ode}}{\textit{probability flow} \gls{ode}}
\newcommand{\qdata}{q_{\text{data}}}
\newcommand{\kl}[1]{\textsc{KL}\left(#1\right)}
\newtheorem*{assumption*}{Assumption}
\newtheorem*{corollary*}{Corollary}
\newtheorem*{definition*}{Definition}
\newtheorem{lemma}{Lemma}
\newtheorem*{lemma*}{Lemma}
\newtheorem{proposition}{Proposition}
\newtheorem*{proposition*}{Proposition}
\newtheorem{theorem}{Theorem}
\newtheorem*{theorem*}{Theorem}
\renewcommand{\mid}{~\vert~}
\newcommand{\norm}[1]{\left\lVert#1\right\rVert}
\newcommand{\mba}{\mathbf{a}}
\newcommand{\mbb}{\mathbf{b}}
\newcommand{\mbc}{\mathbf{c}}
\newcommand{\mbm}{\mathbf{m}}
\newcommand{\mbw}{\mathbf{w}}
\newcommand{\mbx}{\mathbf{x}}
\newcommand{\mby}{\mathbf{y}}
\newcommand{\mbz}{\mathbf{z}}
\newcommand{\mbA}{\mathbf{A}}
\newcommand{\mbC}{\mathbf{C}}
\newcommand{\mbH}{\mathbf{H}}
\newcommand{\mbI}{\mathbf{I}}
\newcommand{\mbP}{\mathbf{P}}
\newcommand{\mbR}{\mathbf{R}}
\newcommand{\cL}{\mathcal{L}}
\newcommand{\cN}{\mathcal{N}}
\newcommand{\cT}{\mathcal{T}}
\newcommand{\E}{\mathop{\mathbb{E}}} 
\icmltitlerunning{Automated Denoising Score Matching for Nonlinear Diffusions}
\begin{document}

\twocolumn[
\icmltitle{What’s the score? \\Automated Denoising Score Matching for Nonlinear Diffusions}

\icmlsetsymbol{equal}{*}

\begin{icmlauthorlist}
\icmlauthor{Raghav Singhal}{equal,yyy}
\icmlauthor{Mark Goldstein}{equal,yyy}
\icmlauthor{Rajesh Ranganath}{yyy,comp}
\end{icmlauthorlist}

\icmlaffiliation{yyy}{Courant Institute of Mathematical Sciences, New York University}
\icmlaffiliation{comp}{Center for Data Science, New York University}

\icmlcorrespondingauthor{Raghav Singhal}{rsinghal@nyu.edu}
\icmlcorrespondingauthor{Mark Goldstein}{goldstein@nyu.edu}

\icmlkeywords{Machine Learning, ICML}

\vskip 0.3in
]

\printAffiliationsAndNotice{\icmlEqualContribution} 

\begin{abstract}
    Reversing a diffusion process by learning its score forms the heart of diffusion-based generative modeling and for estimating properties of scientific systems. The diffusion processes that are tractable center on linear processes with a Gaussian stationary distribution. This limits the kinds of models that can be built to those that target a Gaussian prior or more generally limits the kinds of problems that can be generically solved to those that have conditionally linear score functions. In this work, we introduce a family of tractable denoising score matching objectives, called local-DSM, built using local increments of the diffusion process. We show how local-DSM melded with Taylor expansions enables automated training and score estimation with nonlinear diffusion processes. To demonstrate these ideas, we use automated-DSM to train generative models using non-Gaussian priors on challenging low dimensional distributions and the \textsc{cifar10} image dataset. Additionally, we use the automated-DSM to learn the scores for nonlinear processes studied in statistical physics. 
\end{abstract}

\section{Introduction}
Modeling with diffusion processes has led to advances in generative models \citep{dhariwal2021diffusion,nichol2021improved,nichol2021glide,sasaki2021unit} and in the computation of properties of scientific systems through the estimation of the score of a diffusion \citep{boffi2023deep,boffi2023probability,huang2024score}. 

Score models can be trained for a generic diffusion  process, that may be nonlinear, using the 
the \gls{ism} objective \citep{huang2021variational,song2020improved, boffi2023probability}. However, estimating the \gls{ism} objective requires computing the divergence of the score model. Computing the divergence directly is memory intensive, therefore,  the stochastic Hutchinson trace estimator \citep{hutchinson1989stochastic, grathwohl2018ffjord} is used for computational efficiency. However, the use of the stochastic trace estimator leads to noisy gradients and requires differentiation during the forward pass.

An alternative to the \gls{ism} objective is the \gls{dsm} objective \citep{vincent2011connection,song2020denoising,song2020score}. The \gls{dsm} objective has powered many of the improvements in \glspl{dbgm} \citep{song2020score,dockhorn2021score,singhal2023diffuse}. However, training with \gls{dsm} requires the score of the transition kernel $q(\mby_t \mid \mby_0)$, which is typically not available for nonlinear processes. Neither \gls{ism} or \gls{dsm} provide a good option for training score models with generic, nonlinear noise or inference processes.

A natural question one can ask is why study nonlinear inference processes? At a high level more generic, easy-to-use computation has a history of unlocking other techniques \citep{baydin2018automatic,ranganath2014black,ranganath2016hierarchical,kucukelbir2017automatic}.
Recent work introduces new choices of inference processes for generative modeling, but the processes introduced are limited to linear ones with Gaussian stationary distributions \citep{dockhorn2021score,singhal2023diffuse,pandey2023generative, du2023flexible}. Automated training for nonlinear inference processes would allow for rapid prototyping of non-Gaussian priors using nonlinear Langevin processes \citep{pavliotis2016stochastic} and, more generally, nonlinear drifts in the inference process. 

Next, in several applications the inference process is given to us. For many systems of interest in statistical physics \citep{chandler1987introduction,spohn2012large,otsubo2022estimating}, finance \citep{kusuoka2004diffusion}, biology \citep{fleming1975diffusion}, the evolution of the system is governed by high-dimensional nonlinear diffusion processes. Several properties of these systems, such as the entropy production rate \citep{otsubo2022estimating}, require access to the density and are challenging to estimate from samples alone. Typical approaches for estimating the density, such as solving the Fokker-Planck equation \citep{pavliotis2016stochastic} are infeasible in high dimensions. Therefore, \citet{boffi2023deep,boffi2023probability} use techniques for learning the score developed in \glspl{dbgm} to study quantities such as the density, the probability current, and the entropy production of physical systems.  \emph{Given the utility of nonlinear inference processes and the lack of efficient estimation with them, we need new objectives for training with nonlinear inference processes.}

In this work, we introduce a training algorithm, \textit{automated \gls{dsm}}, that expands the applicability of \gls{dsm} to a broad class of nonlinear inference processes. Automated \gls{dsm} relies on a few methodological innovations:
\begin{enumerate}
    \item Derive a local-\gls{dsm} objective built from local increments of the transition kernel. For image-generation experiments, we also develop a perceptually weighted local-\gls{dsm} objective.
    \item Create tractable approximations to the score of the transition kernel $q(\mby_t \mid \mby_s)$ using local linearization
    \item Design time pairs $s, t$ to control the  error in approximating the local transition kernel $q(\mby_t \mid \mby_s)$.
\end{enumerate}

To test these automations, we train  \glspl{dbgm} with inference processes with non-Gaussian stationary distribution and score models for nonlinear inference processes studied in the physical sciences.
In our experiments:
\begin{enumerate}
    \item We show that training \glspl{dbgm} with the local-\gls{dsm} objective is faster than the \gls{ism} objective, on low-dimensional synthetic datasets, physical systems, and \textsc{cifar10}. 
    \item We demonstrate the flexibility of automated DSM by training \glspl{dbgm} with non-Gaussian priors, such as a mixture of Gaussians and the Logistic distribution, and estimating scores for nonlinear inference processes in the sciences \textit{without requiring manual derivations}.
\end{enumerate}
These findings highlights that local \gls{dsm} objectives and the automations provided in this work enable fast and derivation free training for nonlinear inference processes.

\subsection{Related Work}

\citet{huang2022riemannian,boffi2023deep,boffi2023probability} train diffusion models using nonlinear inference processes with the \gls{ism} objective. In \cref{sec:experiments}, we show that even for $2$d problems, using the local-\gls{dsm} objective leads to faster convergence and better sample quality compared to using the \gls{ism} objective.

\citet{doucet2022score} apply techniques from score-based generative modeling to annealed importance sampling \citep{neal2001annealed}. For a given unnormalized target density $\pi$, they specify discrete-time Markov transition kernels $q(\mby_{k+1} \mid \mby_{k})$ using the Euler-Maruyama \citep{sarkka2019applied} updates of a Langevin process with $\pi$ as the stationary distribution, and then learn the reverse transition kernels $p_\theta(\mbz_k \mid \mbz_{k+1})$. They derive a discrete-time denoising score matching objective based on \gls{kl} divergence, similar to \citet{sohl2015deep,ho2020denoising}. In this work, we derive a continuous-time \gls{elbo} on the model likelihood $\log p_\theta(x)$ as well as considering arbitrary nonlinear inference processes. Training in continuous-time is known to lead to tighter likelihood bounds \citep{kingma2021variational}.

\paragraph{Implicit nonlinear Diffusions.} \citet{kim2022maximum} introduce a variational lower bound for \textit{implicit} nonlinear inference processes by using a normalizing flow to map the data to a latent space and then learning a \gls{dbgm} in the latent space with linear
inference proceesses. Similarly, \citet{vahdat2021score,rombach2022high} train \glspl{dbgm} in the latent space of variational autoencoders. However, the set of processes considered in the latent space are still linear. In this work, we consider a complementary approach: diffusion processes that are \textit{explicitly} nonlinear, without the use of a latent space.

\paragraph{Stochastic Interpolants.} 
\citet{albergo2022building,
albergo2023stochastic} introduce an interpolant process that is defined via independent samples $\mby_0 \sim \qdata$ and $\mby_1 \sim \pi_\theta$. The interpolant is defined as $\mby_t = I(t, \mby_0, \mby_1)$, and the idea is to define noisy states as an interpolation between samples from two endpoint distributions, as opposed to the approach of picking a stationary distribution in \glspl{dbgm}. However, when the interest is not generative modeling, but to study physical, biological, or financial systems that are explicitly known to follow a certain nonlinear \gls{sde}, it may be challenging to find the endpoint distribution $\mby_1$ and interpolant $I$ such that $\mby_t$ is distributed according to solutions of the given \gls{sde} under the given initial conditions $\mby_0$.

\citet{bartosh2024neural} introduce neural flow diffusion models. They define an inference process $\mby_t$ using a learnable transformation $\mby_t = F_{\phi}(\varepsilon, t, x)$, where $\varepsilon \sim \cN(0, I_d)$ and the transformation $F_\phi$ is invertible with respect to $\varepsilon$; these transformations are shown to improve likelihoods on image modeling tasks. However, if the object of interest is the score of a \textit{given} \gls{sde}, finding the corresponding invertible transformation $F_\phi$ is challenging in general.

\section{Background and Setup}\label{sec:setup}

Training generative models with diffusions or score estimation starts with defining an \textit{inference process} $\mby_t$, which is of the form:
\begin{align} 
    \label{eq:generic_sde}
    d \mby_t = f(\mby_t, t) dt + g(t) d\mbw_t, \qquad  t \in [0, T]
\end{align}
where $\mby_0 \sim \qdata$ and $f, g$ are chosen such that $q(\mby_T) \approx \pi_\theta$ where $\pi_\theta$ is the model prior. We then define a generative process $\mbz_t$ with the model drift and diffusion co-efficient tied to the inference process:
\begin{align}\label{eq:diffusion_model}
    d \mbz_{t} = \left[ gg^\top s_\theta - f \right](\mbz_{t}, T-t) dt + g(T - t) d\mbw_t ,
\end{align}
where $s_\theta: \mbR^d \rightarrow \mbR^d$ is the score network and integration is in the forward direction \citep{huang2021variational,singhal2023diffuse}. 

Training the score network $s_\theta$ with maximum likelihood estimation is computationally expensive as it requires estimating the model likelihood $\log p_\theta(\mbz_T=x)$, which would require solving a high-dimensional partial differential equation. \citet{song2020improved,huang2021variational,kingma2021variational} instead derive a variational lower bound, called the \gls{ism} \gls{elbo}:
\begin{align}
   & \log  p_\theta(x) \geq \E_{q(\mby_T \mid x)} \left[ \log \pi_\theta(\mby_T) \right] \quad  + \label{eq:ism_elbo}\\
     & \nonumber  \int_0^T \E_{q(\mby_t \mid x)} \Big[  -\frac{1}{2}\norm{s_\theta}_{gg^\top}^2   - \nabla_{\mby_t} \cdot (gg^\top s_\theta - f)   \Big] dt
\end{align}
where $\norm{\mbx}_{\mbA} = \mbx^\top \mbA \mbx$ for a positive semi-definite matrix $\mbA$. Estimating the \gls{ism} \gls{elbo} requires computing the divergence of the score network $s_\theta$, an memory intensive computation. For computational feasibility, the Hutchinson trace estimator \citet{hutchinson1989stochastic} is used to estimate the divergence $\nabla \cdot s_\theta$, leading to noisy gradients and expensive forward and backward passes.

\paragraph{Denoising Score Matching.} In practice, the \gls{ism} \gls{elbo} is not used for training, instead the \gls{dsm} \gls{elbo} \citep{vincent2011connection,song2020improved,huang2021variational} is used:
\begin{align}\label{eq:ism_elbo} 
    &\log  p_\theta(x) \geq \E_{q(\mby_T \mid x)} \left[\log \pi_\theta(\mby_T) \right] \quad + \\
    \nonumber & \int_0^T \E_{q(\mby_t \mid x)} \Big[   \nabla_{\mby_t} \cdot f  - \frac{1}{2}\norm{s_\theta - s_q}_{gg^\top}^2 +\frac{1}{2}\norm{s_q}_{gg^\top}^2    \Big] dt
\end{align}
where $s_q$ is the score of the transition kernel of the inference process, $s_q(t, \mby_t) = \nabla_{\mby_t} \log q(\mby_t \mid x)$. To train a diffusion model with the \gls{dsm} objective requires the following:
\begin{enumerate}
    \item[(\textbf{D1})] Samples from the transition kernel $q(\mby_t \mid x)$
    \item[(\textbf{D2})] The score of the transition kernel, $\nabla_{\mby_t} \log q(\mby_t \mid x)$
\end{enumerate}
In \citet{singhal2023diffuse}, the authors automate derivations for both \textbf{D1} and \textbf{D2} for linear processes, including for processes with auxiliary variables, such that
the user is only required to specify the linear functions $f(\mby, t), g(t)$.

However, no such automations exist for \gls{dsm} training with nonlinear inference processes, as estimating the transition score for nonlinear processes requires solving high-dimensional partial differential equation (a version of the Fokker-Planck equation, see \citet{lai2023fp}) for every forward pass, infeasible in high-dimensions. 

\paragraph{Assumptions.}
We assume that the diffusion coefficient $g$ is a function of $t$ only, which can be either integrated on intervals $[s, t]$ analytically or numerically. We also assume that the drift $f$, the diffusion coefficient $g$ and the initial condition $\qdata$ satisfy smoothness and integrability assumptions in described in \cref{sec:regularity_assumptions}, these assumptions guarantee that $q(\mby_t), q(\mby_t \mid \mby_s)$ exist and are smooth and unique.

\section{Automated DSM training for nonlinear diffusions}\label{sec:methods}
The approach we will take to make \gls{dsm} tractable for nonlinear processes is to first derive a version of \gls{dsm} that makes use of transitions $q(\mby_t \mid \mby_s)$, with $s$ close to $t$, instead of transitions $q(\mby_t \mid \mby_0)$, and then showing how these transitions can be approximated fairly generally.

\paragraph{Local DSM.} 
Suppose we are given a nonlinear diffusion process of the form \cref{eq:generic_sde}
\begin{align*}
   d\mby_t = f(\mby_t, t) dt + g(t) d\mbw_t
\end{align*}
where the drift $f$ is a function of $\mby_t$ and $t$. Both the \gls{ism} and the \gls{dsm} \glspl{elbo} are integrals of score matching terms:
\begin{align*}
    \cL_{\gls{ism}}(x, t) &= \E_{q(\mby_t \mid x)} \left[ \frac{1}{2} \norm{s_\theta}_{gg^\top}^2 + \nabla_{\mby_t} \cdot gg^\top s_\theta(\mby_t, t) \right] \\
    \cL_{\gls{dsm}}(x, t) &= \E_{q(\mby_t \mid x)} \Big[ \frac{1}{2}  \norm{s_\theta - \nabla_{\mby_t} \log q(\mby_t \mid x)}_{gg^\top}^2 \\ 
      & \qquad - \frac{1}{2} \norm{\nabla_{\mby_t} \log q(\mby_t \mid x)}_{gg^\top}^2 \Big] 
\end{align*}
where $\cL_{\gls{dsm}}(x, t) = \cL_{\gls{ism}}(x, t)$ \citep{huang2021variational,song2020improved}. Now, 
as computing $q(\mby_t \mid x)$ is computionally infeasible for arbitrary nonlinear inference processes,
we show that we can use local transition kernels $q(\mby_t \mid \mby_s)$, where $0 < s < t$ instead of $q(\mby_t \mid \mby_0=x)$, to define the local-\gls{dsm} objective, 
\begin{align*}
    \cL_{\text{L-}\gls{dsm}}(x, t) 
     &= \E_{q(\mby_t, \mby_s \mid x)} \Big[ \frac{1}{2}  \norm{s_\theta - \nabla_{\mby_t} \log q(\mby_t \mid \mby_s)}_{gg^\top}^2 \\ 
      & \qquad - \frac{1}{2} \norm{\nabla_{\mby_t} \log q(\mby_t \mid \mby_s)}_{gg^\top}^2 \Big] . 
\end{align*}
In \cref{lemma:ism_dsm_ts}, we show that $\cL_{\text{L-}\gls{dsm}}(x, t) = \cL_{\gls{ism}}(x, t)$.
\begin{lemma}\label{lemma:ism_dsm_ts}
    Let $q(\mby_s \mid x), q(\mby_t \mid \mby_s)$ be the transition kernels of the process defined in \cref{eq:generic_sde}. For any $0 \leq s < t < T$, we have:
    \begin{align}\nonumber
      \E_{q(\mby_t \mid x)} & \left[ \frac{1}{2} \norm{s_\theta}_{gg^\top}^2 + \nabla_{\mby_t} \cdot gg^\top s_\theta(\mby_t, t) \right]  \\ \nonumber
       \quad & = \E_{q(\mby_t, \mby_s \mid x)} \left[ \frac{1}{2} \norm{s_\theta}_{gg^\top}^2 + \nabla_{\mby_t} \cdot gg^\top s_\theta(\mby_t, t) \right] \\ \nonumber
      \quad & = \E_{q(\mby_t, \mby_s \mid x)} \Big[ \frac{1}{2}  \norm{s_\theta - \nabla_{\mby_t} \log q(\mby_t \mid \mby_s)}_{gg^\top}^2 \\ \label{eq:ism_to_dsm_conversion}
      & \qquad - \frac{1}{2} \norm{\nabla_{\mby_t} \log q(\mby_t \mid \mby_s)}_{gg^\top}^2 \Big] .
    \end{align}
    where $q(\mby_t, \mby_s \mid x) = q(\mby_t \mid \mby_s)q(\mby_s \mid x)$.
\end{lemma}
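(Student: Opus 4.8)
The plan is to verify the two equalities in the display in turn, from top to bottom. Throughout, fix $x$ and $0\leq s<t<T$, and write $\mbA := g(t)g(t)^\top$; by the standing assumption that $g$ depends on $t$ only, $\mbA$ is independent of $\mby_t$, which is what lets it pass through $\mby_t$-derivatives below.

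The first equality is immediate once one notices that the integrand $\tfrac12\norm{s_\theta(\mby_t,t)}_{\mbA}^2 + \nabla_{\mby_t}\cdot(\mbA s_\theta(\mby_t,t))$ is a function of $\mby_t$ alone. Taking its expectation under $q(\mby_t,\mby_s\mid x) = q(\mby_t\mid\mby_s)\,q(\mby_s\mid x)$ and marginalizing out $\mby_s$ replaces the joint by $\int q(\mby_t\mid\mby_s)\,q(\mby_s\mid x)\,d\mby_s$, which by the Chapman--Kolmogorov identity for the transition kernels of \cref{eq:generic_sde} (valid under the regularity assumptions of \cref{sec:regularity_assumptions}) equals $q(\mby_t\mid x)$. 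This yields the first line.

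The second equality is the classical \gls{ism}/\gls{dsm} equivalence \citep{vincent2011connection,huang2021variational}, applied conditionally on $\mby_s$. For fixed $\mby_s$, expanding the weighted square gives
\begin{align*}
  &\tfrac12\norm{s_\theta - \nabla_{\mby_t}\log q(\mby_t\mid\mby_s)}_{\mbA}^2 - \tfrac12\norm{\nabla_{\mby_t}\log q(\mby_t\mid\mby_s)}_{\mbA}^2 \\
  &\qquad = \tfrac12\norm{s_\theta}_{\mbA}^2 - s_\theta^\top\mbA\,\nabla_{\mby_t}\log q(\mby_t\mid\mby_s).
\end{align*}
Using $q(\mby_t\mid\mby_s)\,\nabla_{\mby_t}\log q(\mby_t\mid\mby_s) = \nabla_{\mby_t}q(\mby_t\mid\mby_s)$, the expectation of the cross term against $q(\mby_t\mid\mby_s)$ is $\int s_\theta^\top\mbA\,\nabla_{\mby_t}q(\mby_t\mid\mby_s)\,d\mby_t$; integrating by parts in $\mby_t$, and using that $\mbA$ is symmetric and constant in $\mby_t$, this equals $-\int \big(\nabla_{\mby_t}\cdot(\mbA s_\theta)\big)\,q(\mby_t\mid\mby_s)\,d\mby_t$, the boundary terms vanishing by the decay and smoothness hypotheses on $q(\mby_t\mid\mby_s)$ and $s_\theta$. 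Collecting terms shows that, for each fixed $\mby_s$, the conditional expectation of the local-\gls{dsm} integrand equals $\E_{q(\mby_t\mid\mby_s)}\big[\tfrac12\norm{s_\theta}_{\mbA}^2 + \nabla_{\mby_t}\cdot(\mbA s_\theta)\big]$. Taking $\E_{q(\mby_s\mid x)}$ on both sides then produces exactly the middle expression, which by the first equality equals the \gls{ism} term.

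The one genuinely delicate step is this integration by parts: one must confirm that the assumptions of \cref{sec:regularity_assumptions} are strong enough both to justify differentiating under the integral and to kill the boundary term at infinity; the rest is bookkeeping and never uses linearity of the process. As a sanity check I would note that the argument is simply the statement that for \emph{any} "clean" law and conditional "noising" kernel, the conditional \gls{dsm} objective recovers the \gls{ism} objective of the induced marginal --- here with clean law $q(\mby_s\mid x)$, noising kernel $q(\mby_t\mid\mby_s)$, and induced marginal $q(\mby_t\mid x)$ by Chapman--Kolmogorov.
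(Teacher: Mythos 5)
Your proposal is correct and follows essentially the same route as the paper's proof: the first equality via the Markov/Chapman--Kolmogorov factorization of the joint, and the second via the standard \gls{dsm}/\gls{ism} conversion applied conditionally on $\mby_s$, with the cross term handled by integration by parts so that the divergence terms cancel. The only cosmetic difference is that you expand the local-\gls{dsm} square directly rather than adding and subtracting the transition score inside $\norm{s_\theta}_{gg^\top}^2$, which is the same computation run in the opposite direction.
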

For a proof, see \cref{sec:local_dsm}. Note that in \cref{eq:ism_to_dsm_conversion}, while we still require samples $\mby_s \sim q(\mby_t \mid x)$, we only require the score of the transition kernel $q(\mby_t \mid \mby_s)$, where the choice of $s$ is up to the user. 

For a given time $t$, we define a \textit{schedule} $s(t)$ as a function which satisfies $0 \leq s(t) < t$  for all $t \in (0, T]$. Using the schedule $s(t)$ and \cref{lemma:ism_dsm_ts} allows us to write the \gls{elbo} using local increments $q(\mby_t \mid \mby_s)$, instead of using the score of the transition kernel $q(\mby_t \mid \mby_0)$.
\begin{theorem}\label{thm:local_dsm}        
    Let $q(\mby_t \mid \mby_s)$ be the transition kernel of the process in \cref{eq:generic_sde} and $s(t)$ be a schedule, which satisfies $0 \leq s(t) < t$  for all $t \in (0, T]$. Then for a model process $\mbz_t$ defined in \cref{eq:diffusion_model}, we can lower bound the model log-likelihood as follows:
    \begin{align}\nonumber
        \log p_\theta(x) &\geq \E_{q(\mby_T \mid x)} \left[ \log \pi_\theta(\mby_T) \right]
     \\ & \nonumber +   \int_0^T \E_{q(\mby_t, \mby_s \mid x)} \Big[  \nabla_{\mby_t} \cdot f(\mby_t, t) \\ \nonumber & \qquad - \frac{1}{2} \norm{s_\theta - \nabla_{\mby_t} \log q(\mby_t \mid \mby_s)}_{gg^\top}^2 \\ \label{eq:local_dsm_elbo} 
     &  \qquad +  \frac{1}{2}\norm{\nabla_{\mby_t} \log q(\mby_t \mid \mby_s)}_{gg^\top}^2 dt \Big]  
    \end{align}
    where $s = s(t)$ and $q(\mby_t, \mby_s \mid x) = q(\mby_t \mid \mby_s) q(\mby_s \mid x)$ due to the Markov property.
\end{theorem}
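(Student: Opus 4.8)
The plan is to obtain the bound by combining the already-established \gls{ism} \gls{elbo} of \citet{song2020improved,huang2021variational,kingma2021variational} with the pointwise identity of \cref{lemma:ism_dsm_ts}; no new stochastic calculus is needed. First I would rewrite the \gls{ism} \gls{elbo} in the equivalent form
\[
\log p_\theta(x) \ge \E_{q(\mby_T\mid x)}\!\left[\log\pi_\theta(\mby_T)\right] + \int_0^T\Big(\E_{q(\mby_t\mid x)}\!\left[\nabla_{\mby_t}\cdot f(\mby_t,t)\right] - \cL_{\gls{ism}}(x,t)\Big)\,dt ,
\]
using inside the integrand the algebraic rearrangement $-\tfrac12\norm{s_\theta}_{gg^\top}^2-\nabla_{\mby_t}\cdot(gg^\top s_\theta - f)=\nabla_{\mby_t}\cdot f-\big(\tfrac12\norm{s_\theta}_{gg^\top}^2+\nabla_{\mby_t}\cdot gg^\top s_\theta\big)$, where $\cL_{\gls{ism}}(x,t)$ is the \gls{ism} score-matching term defined in \cref{sec:methods}.

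Next, for each fixed $t\in(0,T)$ I would set $s=s(t)$, which is an admissible choice because the schedule satisfies $0\le s(t)<t$, and apply \cref{lemma:ism_dsm_ts}. That lemma supplies exactly the two substitutions needed: $\cL_{\gls{ism}}(x,t)=\cL_{\text{L-}\gls{dsm}}(x,t)$, i.e.\ the \gls{ism} term equals the local-\gls{dsm} term built from $q(\mby_t\mid\mby_s)$; and, since $\nabla_{\mby_t}\cdot f(\mby_t,t)$ is a function of $\mby_t$ only while the $\mby_t$-marginal of $q(\mby_t,\mby_s\mid x)=q(\mby_t\mid\mby_s)q(\mby_s\mid x)$ is $q(\mby_t\mid x)$ by the Chapman--Kolmogorov equation, also $\E_{q(\mby_t\mid x)}[\nabla_{\mby_t}\cdot f]=\E_{q(\mby_t,\mby_s\mid x)}[\nabla_{\mby_t}\cdot f]$. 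Inserting both identities into the integrand for almost every $t$ and integrating over $[0,T]$ yields precisely \eqref{eq:local_dsm_elbo}.

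It then remains to handle the measure-theoretic bookkeeping that legitimizes this term-by-term substitution: the schedule $s(\cdot)$ should be measurable so that $t\mapsto\cL_{\text{L-}\gls{dsm}}(x,t)$ is measurable and the integral well-defined; the endpoints $t=0,T$ form a null set and are harmless (at $t=0$ the bound is vacuous, and \cref{lemma:ism_dsm_ts} is stated for $t<T$); and the regularity/integrability assumptions of \cref{sec:regularity_assumptions}, which guarantee existence and smoothness of $q(\mby_t\mid\mby_s)$ and finiteness of the score-matching terms, are what allow the pointwise-in-$t$ equalities to transfer to an equality of the integrals.

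The real content is packaged inside \cref{lemma:ism_dsm_ts}, which we are free to assume: its proof rests on a Stein / integration-by-parts identity rewriting $\E_{q(\mby_t\mid x)}[\nabla_{\mby_t}\cdot(gg^\top s_\theta)]$ as a cross term against $\nabla_{\mby_t}\log q(\mby_t\mid\mby_s)$, using $\E_{q(\mby_s\mid\mby_t,x)}[\nabla_{\mby_t}\log q(\mby_t\mid\mby_s)]=\nabla_{\mby_t}\log q(\mby_t\mid x)$ (Vincent's denoising identity applied to the pair $(s,t)$ rather than $(0,t)$). Given the lemma, \cref{thm:local_dsm} is a corollary; the one point that is genuinely new relative to standard \gls{dsm} is that $s$ is allowed to depend on $t$ through a schedule, and this is unproblematic precisely because the lemma holds separately for every admissible pair $(s,t)$, so one simply evaluates it along the graph $s=s(t)$.
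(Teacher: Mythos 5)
Your proposal is correct and follows essentially the same route as the paper's proof: start from the \gls{ism} \gls{elbo}, use the Markov property so that the $\nabla_{\mby_t}\cdot f$ term and the score-matching term can both be written as expectations over $q(\mby_t,\mby_s\mid x)$, and then substitute the identity of \cref{lemma:ism_dsm_ts} pointwise in $t$ along the schedule $s=s(t)$. Your additional remarks on measurability of the schedule and on the marginal of $q(\mby_t,\mby_s\mid x)$ being $q(\mby_t\mid x)$ are implicit in the paper's argument but do not change it.
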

For a proof, see \cref{sec:local_dsm}. Although, the local-\gls{dsm} \gls{elbo} holds for arbitrary pairs $t, s$, estimating the score of the transition kernel $q(\mby_t \mid \mby_s)$ where $s > 0$ is still not feasible for nonlinear drifts. 

In the next section,  we show how the transition kernel $q(\mby_t \mid \mby_s)$ is well approximated using local linearization techniques.

\paragraph{Local Linearization.} The idea is to define a \textit{locally linear} diffusion process on the interval $(s, T]$ with a linearized drift $f$, using an operator $\cT_s$ such that the function $\cT_s f$ is a linear in $\mby_t, t$. Since the process is linear, the transition kernel $\widehat{q}(\hat{\mby}_t \mid \mby_s)$ is Gaussian with mean and covariance characterized by solutions to \glspl{ode} \citep{sarkka2019applied}.

Suppose we are given a sample $\mby_s$ at time $s$, then for $t > s$ we define a locally linear diffusion process
\begin{align}\label{eq:linearized_inf}
    d\mby_t = \left(\cT_{s} f\right) (\mby_t, t) dt + g(t) d\mbw_t, t \in (s, T] .
\end{align}
We have several choices for the operator $\cT_s$ \citep{ozaki1993local,ozaki1992bridge}, see section 9.3 in \citet{sarkka2019applied} for examples. In this work, we study two examples of the operator $\cT_s$, first $\cT_{\mby_s, s}$ which is a first-order Taylor expansion of the drift $f(\mby_t, t)$ around $(\mby_s, s)$ and second $\cT_{\mby_s, t}$ a first-order Taylor expansion around $(\mby_s, t)$. For ease of exposition, we discuss the first operator:
\begin{align}
 \label{eq:linearization_2}
\begin{split}
    &\left(\cT_{\mby_s,s} f\right)(\mby_t, t)
    = f(\mby_s, s) +
     \nabla_s f(\mby_s, s) (t - s) \\
    & \qquad
     +
     \nabla_{\mby_s} f(\mby_s, s) \left( \mby_t - \mby_s \right)
\end{split} \\ \nonumber
     & \qquad = \Big( f(\mby_s, s) +
     \nabla_s f(\mby_s, s) (t - s) + \nabla_{\mby_s} f(\mby_s, s)\mby_s \Big) \\ \nonumber
     & \qquad \qquad  + \nabla_{\mby_s} f(\mby_s, s) \mby_t \\
    &  \qquad := \mbc_t + \mbA_t \mby_t
\end{align} 
The main idea is that the drift of the locally linear process in \cref{eq:linearized_inf} can be expressed as an affine function  
$(\cT_s f) (\mby_t, t) = \mbc_{t} + \mbA_{t} \mby_t$, where $\mbc_t \in \mbR^d$ and $\mbA_t \in \mbR^{d \times d}$. For processes with affine drifts and spatially invariant diffusion coefficient ($g(t, \mby) = g(t)$), the transition kernel $q(\mby_t \mid \mby_s)$ is Gaussian (see section 6.1 in \citet{sarkka2019applied}), therefore we only need to compute the mean and covariance of the locally linear process. 

Next, we present how to compute the mean and covariance and then show how we can apply these ideas to the locally-linear approximations of nonlinear drifts. We provide all derivations in \cref{sec:local_linearization} including those for the second Taylor expansion around $(\mby_s, t)$. In this expansion, the matrix $\mbA$ is a function of time $t$. 

\paragraph{Mean and Covariance Equations.} 
For linear processes with drift $f(\mby_t, t) = \mbc_{t} + \mbA_{t} \mby_t$ and diffusion co-efficient $g(t)$, the mean and covariance are solutions to the following \glspl{ode}:
\begin{align}\label{eq:mean_ode}
    \frac{d}{dt}\mbm_{t | s} &= \mbc_{t} + \mbA_{t} \mbm_{t | s} \\ \label{eq:cov_ode}
    \frac{d}{dt}\mbP_{t | s} &= \mbA_{t} \mbP_{t | s} + \mbP_{t | s} \mbA_{t}^\top  + gg^\top(t)
\end{align}
where $\mbm_{s| s} = \mby_s$ and $\mbP_{s | s} = 0$. The solutions to \cref{eq:cov_ode,eq:mean_ode} can be expressed as integrals:

\begin{align}\label{eq:mean_ode_soln_pre}
    \mbm_{t | s} &= \exp\left[\int_s^t \mbA_{\tau} d\tau \right] \mby_s + \int_s^t \exp[\mbA_{t - \tau}] \mbc_{\tau} d\tau \\ \label{eq:cov_ode_soln_pre}
    \mbP_{t | s} &= \int_s^t \exp[\mbA_{t - \tau}] gg^\top (\tau) \exp[\mbA^\top_{t - \tau}] d\tau  
\end{align}
See \cref{sec:local_linearization} for derivations.
Both the mean and covariance \gls{ode} solutions require integrating matrix exponentials, which are not amenable to easy manipulation and require specific derivations for each inference process, for instance see pages 50-54 in \citet{dockhorn2021score}. 

In the next section, for any choice of the drift $f$ and diffusion coefficient $g$, we derive a solution to the mean \gls{ode} in \cref{eq:mean_ode} and the covariance \gls{ode}, using matrix exponentials, for the Taylor expansion operator around $(\mby_s, s)$ that only involves integrating the diffusion co-efficient $g$.

\begin{algorithm}[ht!]
  \caption{Sampling and score estimation}\label{alg:sampling_and_score}
\begin{algorithmic}
\STATE{{\bfseries Input:} Inference process $q$, time $t$, scheduler $s(t)$, and data $x$}
\STATE{\textbf{Output:} Samples $q(\mby_t, \mby_s \mid x)$ and score estimate $\nabla_{\mby_t} \log q(\mby_t \mid \mby_s)$}
  \STATE{Sample $\mby_s$ by numerically integrating \cref{eq:generic_sde}  
  }
  \STATE{Compute $\mbm_{t | s}, \sigma_{t | s}$, solutions to \cref{eq:mean_ode,eq:cov_ode} respectively.}
  \STATE{Sample $\varepsilon \sim \cN(0,I_d)$ and then let:
  \begin{align*}
   \mby_t &= \mbm_{t \mid s} + \sigma(t|s) \varepsilon  \\
   \nabla_{\mby_t} \log \widehat{q}(\mby_t \mid \mby_s) &=     
   -\sigma^{-1}_{t | s} \varepsilon
  \end{align*}
  }
\STATE{\textbf{Return}: $\mby_t, \mby_s$ and score estimate $\nabla_{\mby_t} \log \widehat{q}(\mby_t \mid \mby_s)$}
\end{algorithmic}
\end{algorithm}

\paragraph{Mean and Covariance Estimation.} 
\citet{singhal2023diffuse} use a matrix factorization technique (see section 6.2 in \citet{sarkka2019applied}) to automate solving differential equations like in \cref{eq:cov_ode,eq:mean_ode} using matrix exponentials. 

The idea is that equations of the form \cref{eq:cov_ode} can be solved using the matrix factorization $\mbP_{t | s} = \mbC_{t} \mbH^{-1}_{t}$, where $\mbC_{t}, \mbH_{t}$ evolve as follows:
\begin{align}
    \begin{pmatrix}
        \frac{d}{dt} \mbC_{t} \\
        \frac{d}{dt} \mbH_{t}
    \end{pmatrix} = 
    \begin{pmatrix}
        \mbA_{t} & gg^\top (t) \\    
        \mathbf{0} & -\mbA^\top(t)
    \end{pmatrix} \begin{pmatrix}
        \mbC_{t} \\
        \mbH_{t}
    \end{pmatrix}    
\end{align}
which can be solved by matrix factorization and scalar integration of  $\mbA_{\tau}$ and $gg^\top_{\tau}$ on the interval $[s, t]$:
\begin{align}\label{eq:cov_soln}
        \begin{pmatrix}
        \mbC_{t} \\
        \mbH_{t} 
    \end{pmatrix} = 
    \exp 
    \begin{pmatrix}
       [\mbA_{\tau}]_s^t &  \phantom. \phantom. [gg^\top(\tau)]_s^t \\    
        \mathbf{0} &  -[\mbA^\top_{\tau}]_s^t
    \end{pmatrix} 
    \begin{pmatrix}
        \mathbf{0} \\
        \mathbf{I}
    \end{pmatrix}   
\end{align}
where $[\mbA_{\tau}]_s^t := \int_s^t \mbA_{\tau} d\tau$. Since $\mbA_t$ is defined to be homogeneous, we do not have to integrate $\mbA$, while $g$ can be time in-homogeneous.

We can solve the mean \gls{ode} in \cref{eq:mean_ode} for the Taylor expansion around $(\mby_s, s)$. The matrix $\mbA$ is time-homogeneous and the function $\mbc$ can be separated into a time-varying and time-homogeneous part, $\mbc_t =  \mbc_1 + \mbc_2 t$. We can solve this affine \gls{ode} exactly:
\begin{align}\nonumber
    \mbm_{t | s} &= \exp\left[\int_s^t \mbA_{\tau} d\tau \right] \mby_s + \int_s^t \exp[\mbA_{t - \tau}] \mbc_{\tau} d\tau \\ \nonumber
   \mbm_{t \mid s} &= \exp((t-s) \mbA) + (\exp((t-s) \mbA) - I)\mbA^{-1} \mbc_1 \\ \nonumber
   & \quad + \exp((t-s)\mbA) \left[ s  \mbA^{-1} + \mbA^{-2} \right]\mbc_2  \\ \nonumber
   & \qquad - \left[t \mbA^{-1} + \mbA^{-2} \right]c_2 
\end{align}
For complete derivations, see \cref{sec:mean_cov_T1}.

Now, given a sample $\mby_s$ at time $s$, we can sample from the locally linear process $q(\mby_t \mid \mby_s)$ as follows:
\begin{align}
    \mby_t = \mbm_{t \mid s} + \sigma_{t\mid s} \varepsilon
    \label{eq:sample}
\end{align}
where $\varepsilon \sim \cN(0, \mbI_d)$ and $\sigma_{t | s}$ is the matrix square root of $\mbP_{t | s}$ and $\sigma^{-1}_{t | s}$ is the inverse of the matrix square root, similar to the transition score computation defined for \gls{mdm} processes in \citet{singhal2023diffuse}. We can estimate the score of the transition kernel $q(\mby_t \mid \mby_s)$ at a sample from \Cref{eq:sample} as
\begin{align}\label{eq:linearized_score}
\nabla_{\mby_t} \log \widehat{q}(\mby_t \mid \mby_s) &=  -\sigma^{-1}_{t | s} \varepsilon .   
\end{align}
 
\paragraph{Algorithms.}
Making use of the local linearization and the automated mean and covariance derivations, we provide algorithms for automated training with nonlinear inference processes called automated \gls{dsm}. In \Cref{alg:sampling_and_score} we show how to  sample from $\widehat{q}(\mby_t \mid \mby_s)$ and computing its transition score. Finally, in \cref{alg:dsm_ts_elbo}, we present the automated \gls{dsm} algorithm, where for a given score network $s_\theta$ and sample $x$, we return an estimate of the local-\gls{dsm} \gls{elbo}. See \cref{fig:dsm_training} for an overview of the local \gls{dsm} training pipeline.

\begin{figure}[t]
    \centering
    \includegraphics[width=\columnwidth]{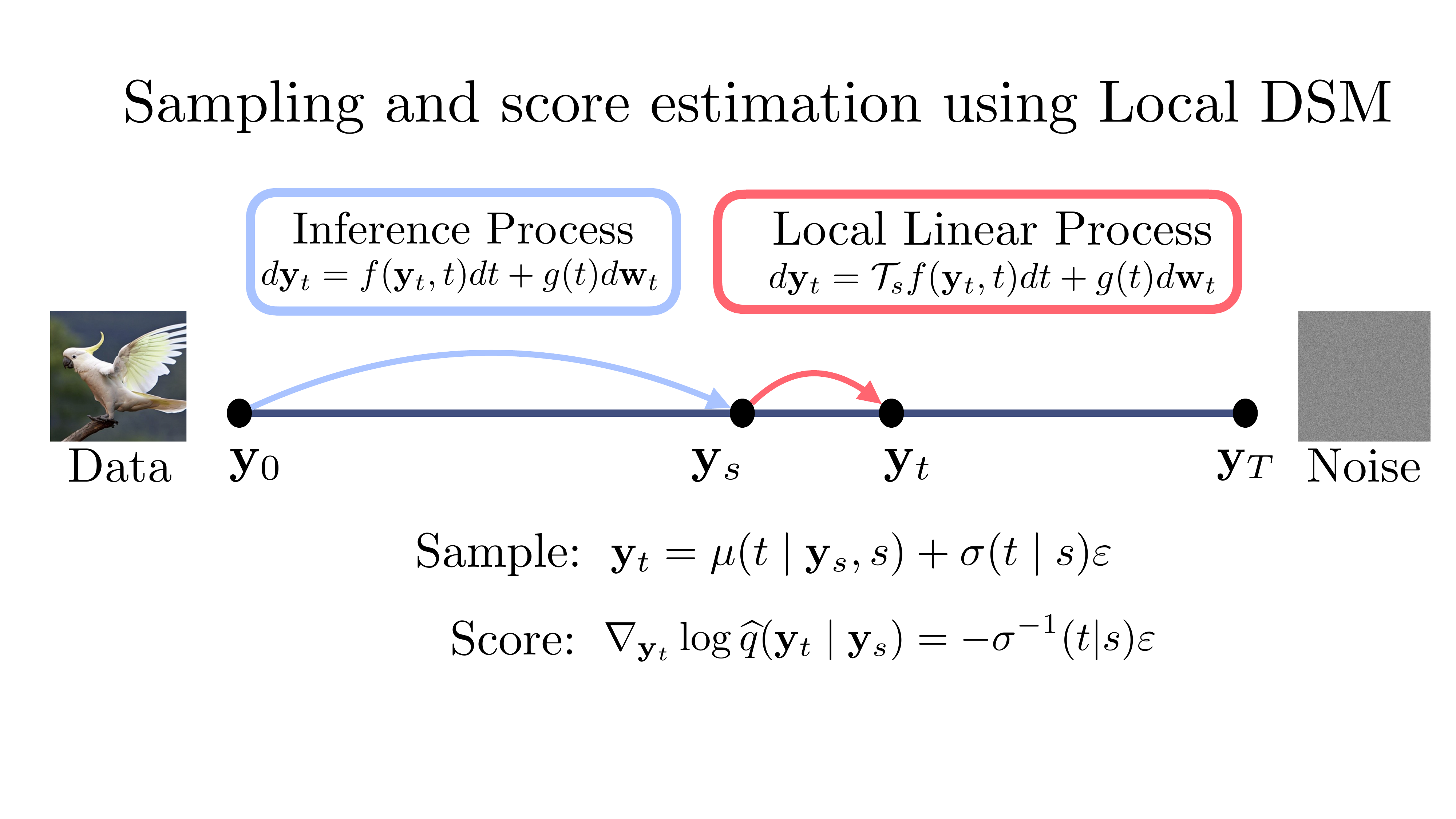}
    \vspace{-1cm}
    \caption{\textbf{Training with Automated \gls{dsm}}:  Given a nonlinear inference process $q$ and a time $t$ with sample $\mby_0 = x$, we use a numerical sampler till time $s(t)$ and then use the locally linear process for sampling $\mby_t \mid \mby_s$ and estimating the transition score.}
    \label{fig:dsm_training}
\end{figure}

Now, despite having access to a tractable score approximation, we note that a first-order Taylor approximation introduces errors in the estimate of the score, specifically when the gap between $s, t$ is large. In the next section, we discuss methods to control the approximation error, particularly by tailoring a schedule to control the Taylor approximation error. 

\begin{algorithm}[t!]
  \caption{Automated \gls{dsm}: estimating local-\gls{dsm} \gls{elbo}}\label{alg:dsm_ts_elbo}
\begin{algorithmic}
\STATE{{\bfseries Input:} Inference process $q$, model prior $\pi_\theta$, score network architecture $s_\theta(\mby_t, t)$, scheduler $s(t)$, and data $x$}
\STATE{\textbf{Return:} Differentiable Local \gls{dsm} \gls{elbo} estimate}
  \STATE{
  Sample $t \sim \text{Uniform}[0, T]$ \\}  
  \STATE{
  Use \cref{alg:sampling_and_score} to get samples $q(\mby_t, \mby_s \mid x)$ and score estimate $\nabla_{\mby} \log \widehat{q}({\mby}_t \mid {\mby}_{s})$ \\
  }
  \STATE{Compute 
  \begin{align*}
     \mathcal{L}(&x, \theta) = \frac{1}{2} \norm{s_\theta - \nabla_{\mby} \log \widehat{q}({\mby}_t \mid {\mby}_{s})}_{gg^\top}^2 \\
     &  - \frac{1}{2} \norm{\nabla_{\mby_t} \log \widehat{q}({\mby}_t \mid {\mby}_{s})}_{gg^\top}^2 - \nabla \cdot f({\mby}_t, t) 
  \end{align*}  
}
\STATE{Sample $\mby_T$ by numerical integration.}
\STATE{\textbf{Output}: $-T \cL + \log \pi_\theta(\mby_T)$}
\end{algorithmic}
\end{algorithm}

\paragraph{Controlling the Taylor Error with Scheduled Pairs.} 

Suppose $\mby_t$ is the \gls{vpsde} process \citep{song2020score}:
\begin{align}
    d\mby_t = - \frac{1}{2}\beta_t \mby_t + \sqrt{\beta_t} d\mbw_t
\end{align} 
Then the mean and covariance are:
\begin{align*}
    \mbm_{t | s} &= \exp\left(-\frac{1}{2} [\beta_\tau]_s^t \right) \mby_s, \quad
   \mbP_{t | s} &= 1 - \exp\left( -[\beta_\tau]_s^t  \right) ,
\end{align*}
where $[\beta_\tau]_s^t = \int_s^t \beta_{\tau} d\tau$. The difference between the distributions $q(\mby_t), q(\mby_s)$ is therefore controlled by the integral $[\beta_\tau]_s^t$. The gap can be made large or small depending on the values taken by $\beta_t$ in $[s, t]$ not on the length,  of the interval. For instance, if $\beta_t = 0.1 + 10t$, then the gap between $q(\mby_t)$ and $q(\mby_{t - \ell})$ is larger for larger $t$ values. Therefore, to control the change between $q(\mby_t)$ and $q(\mby_s)$, we propose the following heuristic: choose pairs $(s, t)$ based on the integrals of the form $\int_{s}^t gg^\top(\tau) d\tau$ rather than a fixed gap $s(t) = t - \ell$ in time for a constant value $\ell$.

To control the error introduced by local linearization, we define \textit{scheduled pairs} $(s,t)$ so that for all $\forall t > t_\text{min} > 0$, for a given $g(t)$ we define $s_\lambda(t)$ such that the integral $\int_s^t g^2(\tau) d\tau$ is equal to a constant $\lambda$ and for $0 <t \leq t_\text{min}$, we set $s_{\lambda}(t) = 0$. We provide a derivation for $s_{\lambda}(t)$ for commonly used $g$ functions in \cref{sec:taylor_appendix}. In case, $g$ cannot be expressed as $gg^\top_t = g^2(t) \mbI_d$ where $g^2(t)$ is a scalar, we can select $s_\lambda$ such that $\max_{i, j} \int^t_s [gg^\top]_{i, j}(\tau) d\tau = \lambda$. 

In \cref{fig:constant_t_v_constant_kl}, we estimate the mean of the local transition kernel for the diffusion process:
\begin{align}\nonumber
    d\mby_t = \beta_t \nabla_\mby \log \pi_\theta(\mby_t) dt + \sqrt{2\beta_t} d\mbw_t,
\end{align}
with $\beta_t = 0.1 + 9.9 t$ and model prior $\pi_\theta = \frac{1}{2}\cN(-1, \frac{1}{2}) + \frac{1}{2}\cN(1, \frac{1}{2})$. We observe that the error in estimating $\mbm_{t | s}, \sigma^2_{t | s}$ is constant for the scheduler $s_{\lambda}(t)$ with $\lambda = 0.05$ versus exploding for $s(t) = t - 0.05$. Here we use $x$ sampled from the two-dimensional checkerboard distribution, see \cref{fig:ism_v_local_dsm_checkerboard}.

\paragraph{Bounds on the error from Taylor expansion.} 
As noted in the previous section, Taylor expansions of the drift can introduce error. In \cref{lemma:error_estimate} in \cref{sec:error_estimate}, we show that the approximation error between the true marginal density $q(\mby_t)$ and the locally linear approximation $\widehat{q}(\mby_t) = \E_{q(\mby_s)} [\widehat{q}(\mby_t \mid \mby_s)]$ can be controlled by the difference of the drifts $f$ and the Taylor approximation $\cT_s f$ on the interval $[s(t), t]$ by upper bounding the KL-divergence. This lemma controls the error between distributions of the exact and approximate process in terms of the error from the Taylor approximation.

\begin{figure}[t]
\centering
\includegraphics[width=0.8\columnwidth]{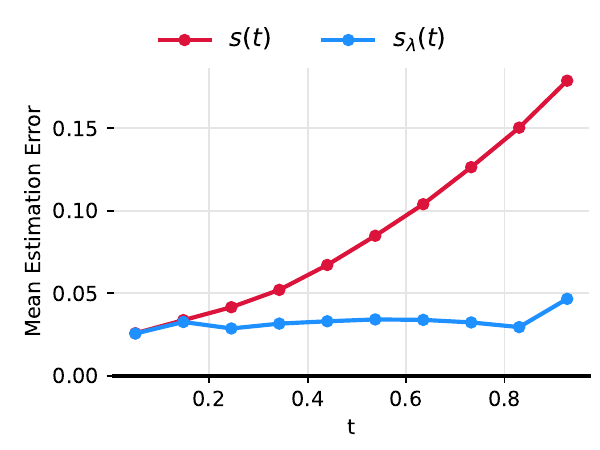}
\caption{\textbf{Local mean $\mbm_{t|s}$ Estimation Error}: we compare the estimation error when using the schedule $s_{\lambda}(t)$ with versus $s(t) = t - 0.05$. We note that using $s(t)$ instead of $s_\lambda(t)$ leads to higher error.}
\label{fig:constant_t_v_constant_kl}
\end{figure}

\subsection{Extensions}
In this section, we present extensions of the local \gls{dsm} \gls{elbo}. First, we present a perceptually weighted version of the local \gls{dsm} \gls{elbo}, typically used for image-modeling. Next, we present a version of the local \gls{dsm} \gls{elbo} for use in score modeling \citep{boffi2023probability,boffi2023deep,lu2023score} in the sciences, where the object of interest is the score of a nonlinear diffusion process and not maximizing the likelihood of a data distribution. The score of the diffusion process is used to study properties of the process such as the entropy, entropy production rate and the density itself \citep{otsubo2022estimating,boffi2023probability}. 

\paragraph{Perceptual Weighting.} In practice, the \gls{dsm} loss is often re-weighted to give uniform weight to each $t$ \citep{song2020improved,ho2020denoising}. To apply this idea in our case, we can observe that $\nabla \log q(\mby_t \mid \mby_s) = -\sigma_{t|s}^{-1} \epsilon$, parameterize the model as $s_\theta(\mby_t, t) = \gamma^{-1}(t,s) \epsilon_\theta(\mby_t, t)$ and multiply the integrand in \cref{eq:local_dsm_elbo} by $\sigma^2_{t|s}$:
\begin{align}\label{eq:local_dsm_perceptual}
    \sigma^2_{t|s} \norm{s_\theta - \nabla \log \widehat{q}(\mby_t \mid \mby_s)}_{gg^\top}^2 = \norm{\frac{\sigma_{t|s}}{\gamma(t,s)}\epsilon_\theta(y_t,t) - \epsilon}_{gg^\top}^2
\end{align}
where we choose $\gamma$ so that $\sigma_{t|s}/\gamma(t,s) \approx 1$. In our generative modeling experiments, we choose $\gamma^2(t,s) = 1- \exp(- 2\int_s^t \beta_{\tau} d\tau)$ for inference processes where the drift takes the form $f(\mby, t) = \beta_t h(\mby)$.

\paragraph{Score Modeling.}
For processes studied in statistical physics, biology, etc, learning the score model is of primary interest. In such instances, we can optimize the denoising score matching term in local-\gls{dsm}:
\begin{align}\label{eq:local_dsm_sm}
    \int_0^T \E_{\widehat{q}(\mby_t, \mby_s \mid x)} \norm{s_\theta - \nabla_{\mby_t} \log q(\mby_t \mid \mby_s)}_{gg^\top}^2 dt
\end{align}
using the automated derivations in this work.

\section{Experiments}\label{sec:experiments}
We test the local-\gls{dsm} objective for training \glspl{dbgm} on a challenging low-dimensional example, on \textsc{cifar10} and on learning the score for coupled equilibrium and non-equilibrium diffusion processes studied in \citep{boffi2023probability}. 

For all experiments, we chose the scheduler $s_\lambda(t)$ with $\lambda = 10^{-2}$, unless otherwise stated. 

\begin{figure}[t]
\includegraphics[width=0.5\textwidth]{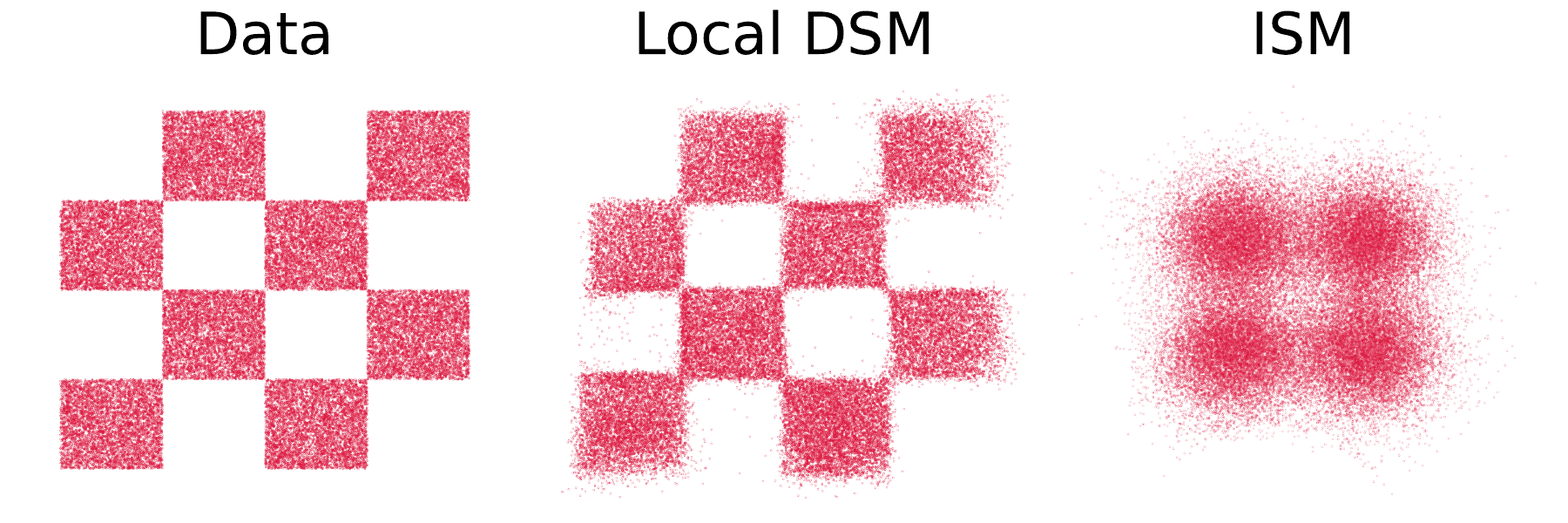}
\caption{\textbf{\gls{ism} v local-\gls{dsm}:} Samples from a local-\gls{dsm} trained model in the middle panel, and samples from an \gls{ism} trained model on the right panel. Both models were trained for $20$k gradient steps, however the local-\gls{dsm} trained model has better sample quality.}
\label{fig:ism_v_local_dsm_checkerboard}
\end{figure}

The integrand in the \gls{elbo} defined in \cref{eq:local_dsm_elbo} is unbounded at $t = 0$ and is numerically unstable for small values of $t$. Therefore, we estimate the integral on an interval $(\delta, T]$ where $\delta = 10^{-3}$. Truncating the \gls{elbo} biases the estimate. \citet{sohl2015deep,song2019generative} use a variational lower bound to derive a valid \gls{elbo}. We derive a valid \gls{elbo} with truncation in \cref{sec:truncated_elbo} and report \glspl{bpd} using the valid \gls{elbo}. 

For sampling from the forward process, we use an adaptive solver \citep{lamba2003adaptive} in all experiments. For the generative modeling experiments we use the Taylor operator that expands around $(\mby_s, t)$, while for the score modeling for non-equilibrium stochastic dynamics we use the Taylor expansion around $(\mby_s, s)$. 

For the generative modeling experiments, we use a Langevin diffusion process with the model prior as its stationary distribution: 
\begin{align}\label{eq:langevin_inf}
    d\mby_t = \beta(t) \nabla_\mby \log \pi_\theta(\mby_t) dt + \sqrt{2\beta(t)} d\mbw_t,
\end{align}
with $\beta(t) = \beta_0 + t (\beta_1 - \beta_0)$ and $\beta_0 = 0.1$ and $\beta_1 = 10$ and the approximation $\cT_{\mby_s, t}$. We parameterize the score model as $s_\theta(t, \mby_t) = -\gamma_{t|s} \varepsilon(t, \mby_t)$, where $\gamma^2_{t|s} = 1 - \exp(-2\int_s^t \beta(\tau) d\tau)$. For the science experiments, we parameterize $s_\theta$ as feedforward neural networks, see the experiments for a description. 

\paragraph{Local DSM vs ISM.} In this experiment, we show that using the local-\gls{dsm} objective leads to faster convergence compared to using the \gls{ism} objective on synthetic $2$d. 

As a low-dimensional example, we train the models on the two-dimensional checkerboard density. We use a three layer feed-forward network with width 256 and with the ReLU activation \citep{nair2010rectified} as the $\varepsilon_\theta$ model. We train two models using the local-\gls{dsm} and \gls{ism} \glspl{elbo} with a Logistic distribution as $\pi_\theta$ in \cref{eq:langevin_inf}.

We train both models with a batch size of 1024 for $20,000$ gradient steps using the AdamW optimizer \citep{loshchilov2017decoupled}. \Cref{fig:ism_v_local_dsm_checkerboard} shows that using the local-\gls{dsm} \gls{elbo} leads to significantly faster convergence even on a low-dimensional synthetic dataset.

\begin{figure}[t]
\vspace{-0.2cm}
\includegraphics[width=0.5\columnwidth]{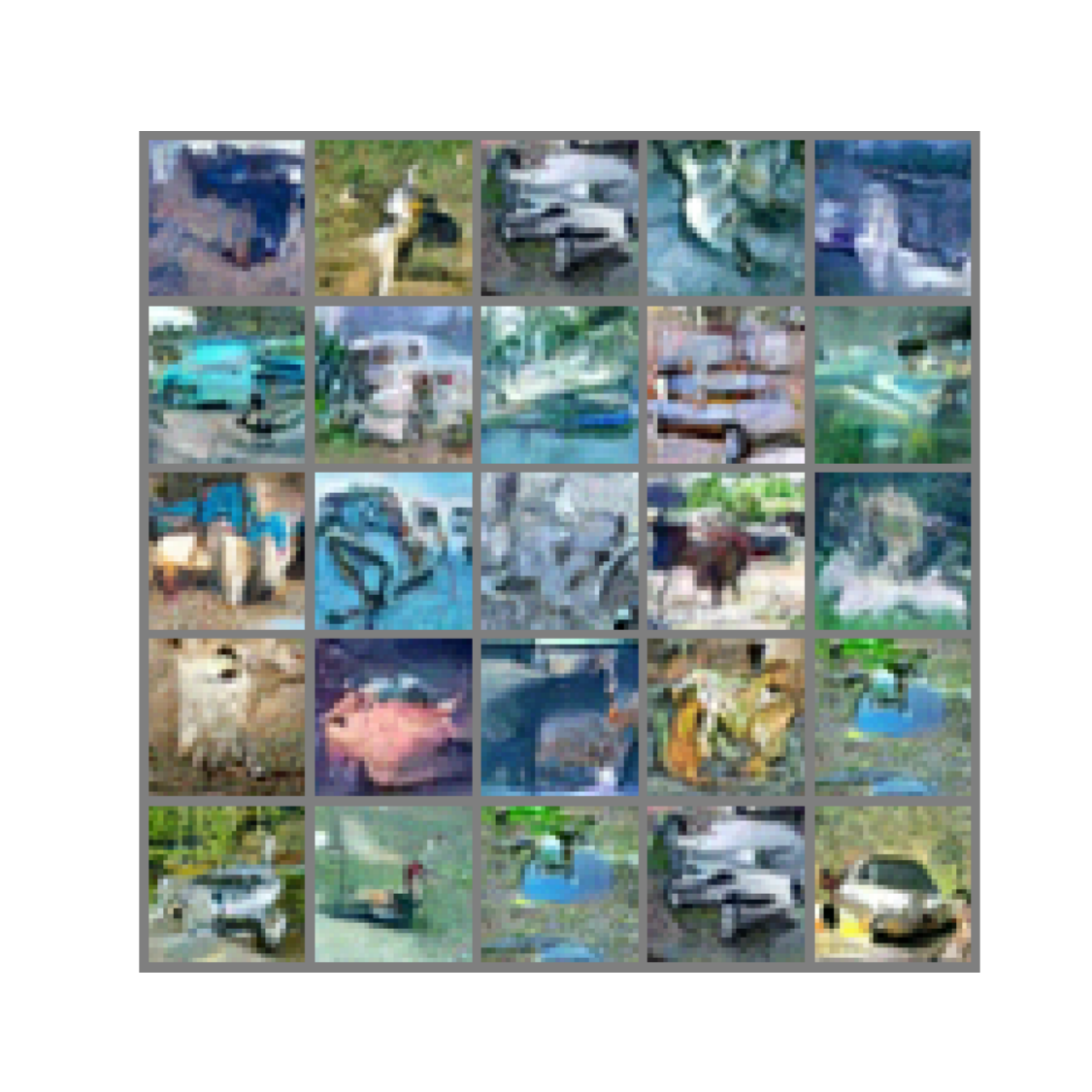}\includegraphics[width=0.5\columnwidth]{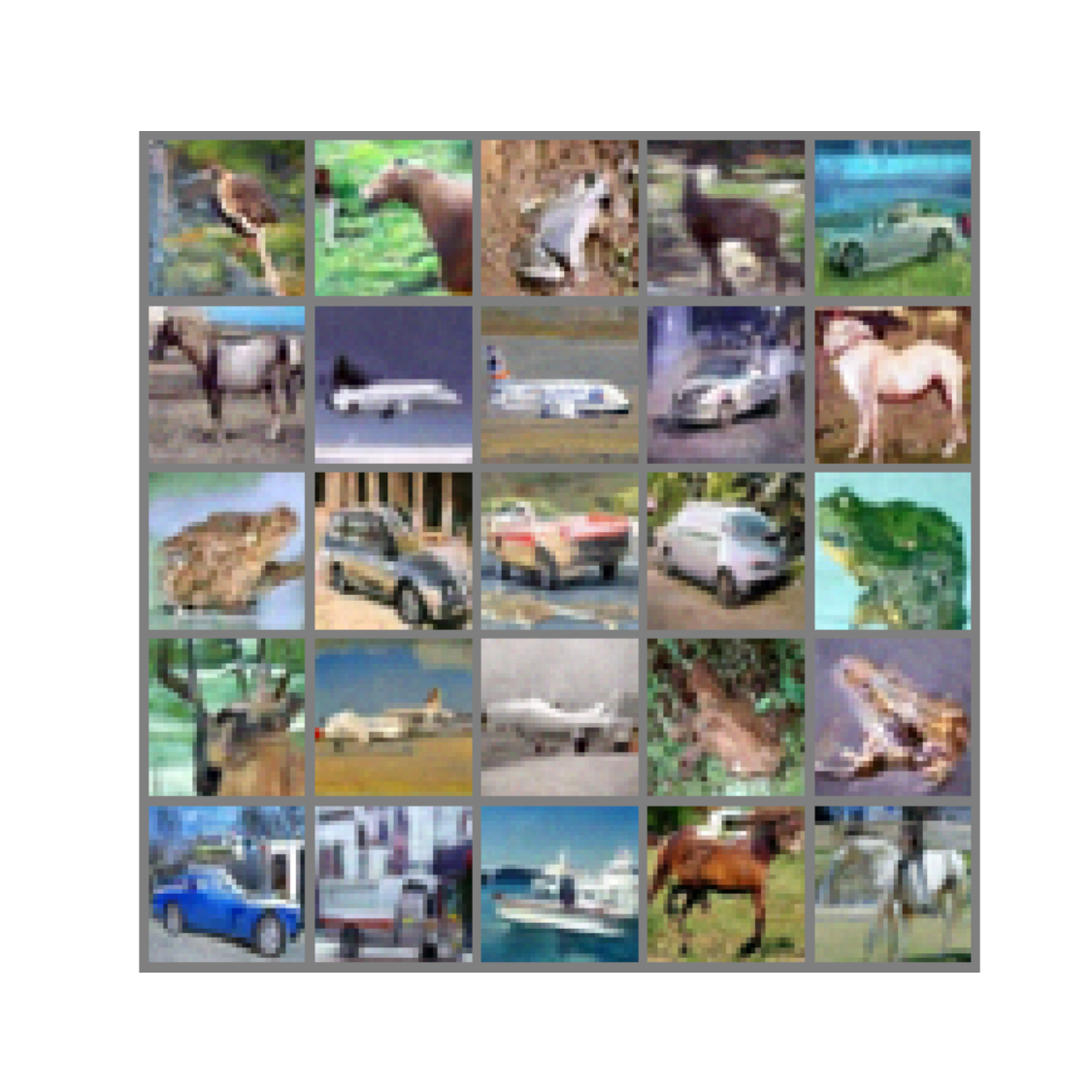}
\caption{\textbf{\gls{ism} vs Local \gls{dsm}.} 
Samples from \glspl{dbgm} trained with \gls{ism} and local-\gls{dsm} objective using models with a \textsc{MoG} prior. We observe that the local-\gls{dsm} trained model samples are significantly better than the \gls{ism} trained model samples.
}
\label{fig:cifar10_samples_181m}
\end{figure}

\paragraph{Image Modeling with Non-Gaussian Priors.}  Next, we train diffusion models on the \textsc{cifar10} dataset, with a Langevin inference process using a non-Gaussian prior as defined in \cref{eq:langevin_inf}.

\begin{table}[th]
    \centering
    \begin{tabular}{lll}
    \toprule 
        Prior $\pi_\theta$ & Objective & \gls{ism} \gls{bpd}  \\ 
      \midrule
        Logistic  & local-\gls{dsm} \gls{elbo} & $\leq$ 3.568 $\pm$ 0.07 \\ 
        Logistic & local-\gls{dsm} (\textsc{pw}) & $\leq$ 3.561 $\pm$ 0.09 \\
        Logistic & \gls{ism} \gls{elbo} & $\leq$ 3.741 $\pm$ 0.09 \\         
       \midrule
        \textsc{MoG} & local-\gls{dsm} \gls{elbo} & $\leq$ 3.496 $\pm$ 0.11 \\              
         \textsc{MoG} & local-\gls{dsm} (\textsc{pw}) & $\leq$ 3.503 $\pm$ 0.151  \\          
        \textsc{MoG} & \gls{ism} \gls{elbo}  & $\leq$ 3.637 $\pm$ 0.14 \\          
         \bottomrule 
        \end{tabular}
    \caption{\textbf{\glspl{bpd} on \textsc{cifar-10}}: We compare models trained using nonlinear inference processes via the \gls{ism} and the local-\gls{dsm} objectives, both the \gls{elbo} and the perceptually-weighted (\textsc{pw}) versions. For the same amount of compute, the local-\gls{dsm} trained models achieve significantly better \glspl{bpd}. \textit{A lower \gls{bpd} is better}.
    }
    \label{tab:cifar10_bpd}
\end{table}

For the model prior, we choose (a) a mixture of Gaussians (\textsc{MoG}) $\pi_\theta(\mby) = \frac{1}{2}\cN(-\frac{1}{2}, \frac{1}{2}) + \frac{1}{2}\cN(\frac{1}{2}, \frac{1}{2})$ and (b) a Logisitic distribution $\pi_\theta = \frac{\exp(-x)}{(1 + \exp(-x))^2}$. Similar to the previous experiment, the score network is a U-Net from \citet{ho2020denoising}. We train using the perceptual weighted objective defined in \cref{eq:local_dsm_perceptual}, the local-\gls{dsm} and the \gls{ism} \glspl{elbo}. For all models we use the noise parameterization for the score model. In \cref{tab:cifar10_bpd} we compare the bits-per-dim (\textsc{bpd}s, \citet{van2016pixel,song2020score,huang2021variational}) of models trained using the local-\gls{dsm} \gls{elbo},  perceptual loss and the \gls{ism} \gls{elbo}. \Cref{tab:cifar10_bpd} shows that given the same amount of compute, the local-\gls{dsm} trained models get better \gls{bpd} upper-bounds. 

In \cref{fig:cifar10_samples_181m}, we show samples generated using models trained with the local \gls{dsm} and the \gls{ism} objectives with the \textsc{MoG} prior and  $\beta_t = \frac{1}{1 - t}$. For training these models, we use a 181 million parameter U-Net score model, adapted from \citep{nichol2021improved}. \Cref{fig:cifar10_samples_181m} shows that training with the local \gls{dsm} objective produced more realistic samples compared to training with \gls{ism}. For samples from \glspl{dbgm} trained with the perceptually weighted local \gls{dsm} objective and the Logistic and \textsc{MoG} priors, see \cref{fig:cifar10_samples_181m_all} in \cref{sec:cifar10_logistic}. 

\begin{table}[h]
    \centering
    \begin{tabular}{l|l|l|l}
    \toprule 
        Prior $\pi_\theta$ & \quad \gls{ism} \gls{bpd} &  $\lambda$ & local-\gls{dsm} \gls{bpd} \\ 
      \midrule
        Logistic  &  &  $0.01$  & $\leq$ 3.566 $\pm 0.097$   \\  
        Logistic  &$\leq$ 3.568 $\pm 0.07$  &  $0.02$   & $\leq$ 3.530 $\pm 0.084$  \\          
        Logistic  &  & $0.05$   & $\leq$ 3.422 $\pm 0.096$  \\  \bottomrule 
        \textsc{MoG} &  & $0.01$   & $\leq$ 3.465 $\pm 0.1242$ \\ 
        \textsc{MoG} &$\leq$ 3.496 $\pm 0.11$  & $0.02$   & $\leq$ 3.434 $\pm 0.1419$  \\ 
        \textsc{MoG} &  & $0.05$   & $\leq$ 3.354 $\pm 0.1879$ \\  \bottomrule 
        \end{tabular}
    \caption{\textbf{Increasing $\lambda$ in the scheduled pair $s_\lambda(t)$}. Using the scheduler $s_\lambda(t)$ with varying values of $\lambda$, we see increasing the gap between $\mby_t$ and $\mby_s$ leads to a growing gap between the unbiased \gls{ism} objective and the local-\gls{dsm} objective.}
    \label{tab:ism_equal_to_dsm}
\end{table}

\paragraph{Do the ISM and Local DSM ELBOs match?}
The local-\gls{dsm} objective makes use of two approximations, the local transition score and numerical sampling, while the \gls{ism} objective only requires numerical sampling. In \cref{tab:ism_equal_to_dsm}, we show that using the constant scheduler $s_\lambda$ for training and parameterization leads to models where the unbiased \gls{ism} and local-\gls{dsm} \glspl{bpd} have similar estimates for smaller values of $\lambda$, and the approximation error increases as $\lambda$ increases.

\begin{figure}[ht]
\centering
\includegraphics[width=\columnwidth]{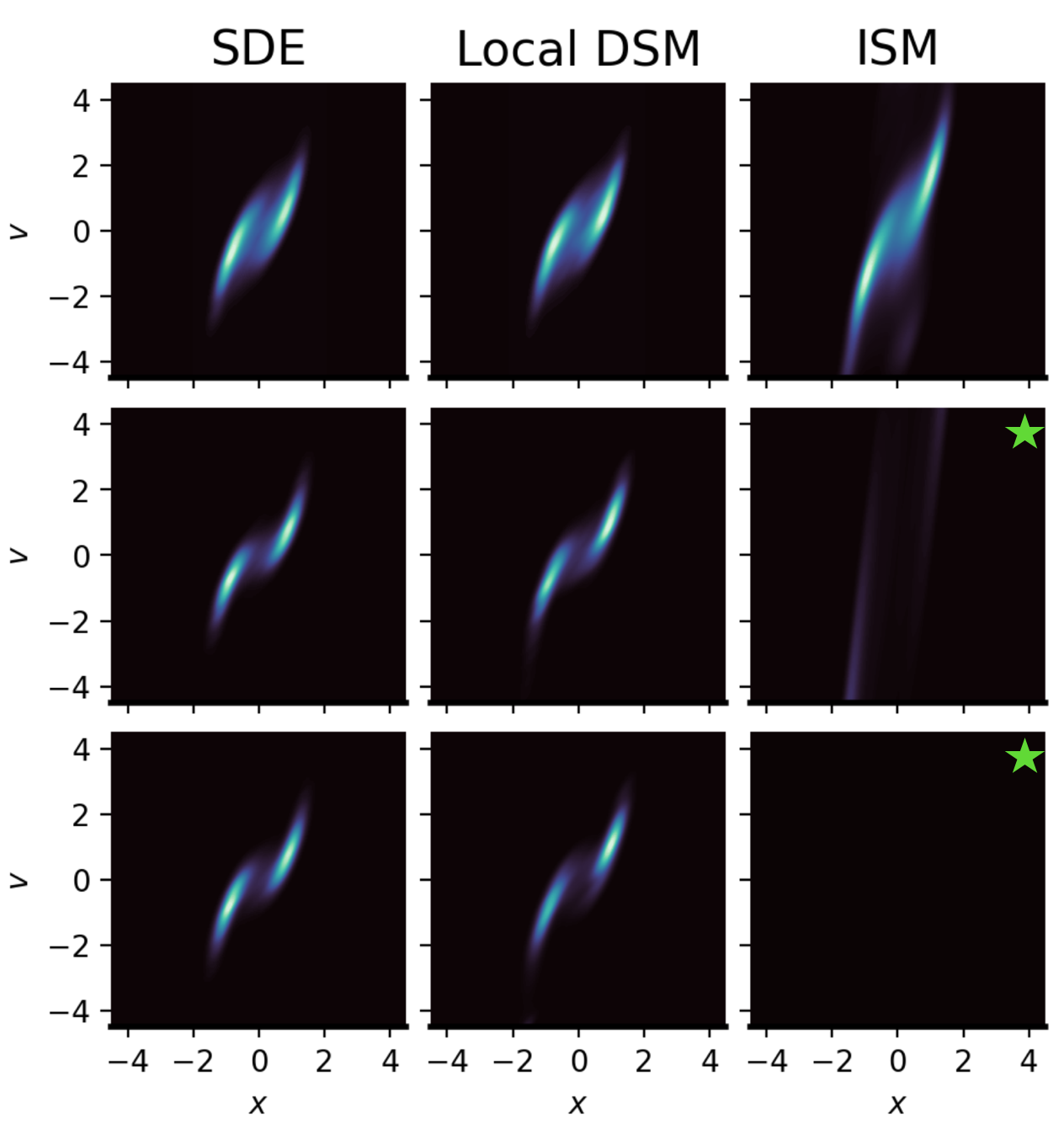}
\caption{\textbf{Samples at $t \in \{1, 3, 5\}$}. Here we compare samples from the process defined in \cref{eq:active_swimmer} on the left panel, and local-\gls{dsm} and \gls{ism} trained model samples in the middle and right panels. The inference process and local-\gls{dsm} trained model samples are near identical.  $\mathbf{\textcolor{green}{\star}}$ We note that \gls{ism} trained model samples quality did not match the inference process' samples and diverged, see \cref{fig:active_swimmer_ism_only} for \gls{ism} model samples.}
\label{fig:active_swimmer}
\end{figure}
\paragraph{Score Modeling for Non-Equilibrium Stochastic Dynamics.} In this experiment, we study a nonlinear system $\mby = (x, v)^\top$, described in \citet{tailleur2008statistical,boffi2023probability} as
\begin{align} \label{eq:active_swimmer}
    d x &= (-x^3 + v) dt, \quad dv = -\gamma v dt + \sqrt{2\gamma D} d\mbw_t
\end{align}
for $t \in [0, T]$ and where $\gamma = 0.1, D=1.0$ and $T = 5.0$ with initial conditions $x_0, v_0 \sim \cN(0, 1)$. The system of equations described in \cref{eq:active_swimmer} does not have a stationary distribution but does exhibit a non-equilibrium statistical steady state \citep{boffi2023probability}.

\Cref{fig:active_swimmer} shows samples from the \gls{pf-ode} \citep{song2020score}:
\begin{align}\label{eq:prob_flow_ode}
    \frac{d}{dt} \mby_t = f(\mby_t, t) - \frac{1}{2} gg^\top(t) s_\theta(\mby_t, t) ,
\end{align}
at different times $t \in \{1, 3, 5\}$. The \gls{pf-ode} defined in \cref{eq:prob_flow_ode} simulates the inference process in forward time, such that $q_\text{ode}(\mby_t) = q_{\text{SDE}}(\mby_t)$ when the score model $s_\theta$ matches the actual score of the inference SDE: $\nabla_\mby \log q_{\text{SDE}}(\mby_t)$. 

We parameterize the score model $s_\theta$ as 3 layer feed-forward network with width $256$. Following \citet{boffi2023probability}, we enforce that the score model is anti-symmetric $s_\theta(t, x, v) = s_\theta(t, -x, -v)$ since the drift $f$ is anti-symmetric. We train both the local-\gls{dsm} and \gls{ism} models for 200,000 gradient steps with a batch size of 1024. 

\Cref{fig:active_swimmer} compares samples from a local-\gls{dsm} trained model versus samples from the \gls{ism} trained model against samples from the inference process defined in \cref{eq:active_swimmer}. The samples produced by the local-\gls{dsm} trained model and the inference process distribution are near identical, the \gls{ism} trained model samples diverge, see \cref{fig:active_swimmer_ism_only} for the \gls{ism} samples. For a quantitative comparison, in \cref{fig:mmd} in \cref{sec:active_swimmer}, we compare the \gls{mmd} distance \citep{smola2006maximum} between the model generated samples and the inference process' samples. We observe that the \gls{ism} model's sample quality deteriorates very rapidly compared to the sample quality of local \gls{dsm} trained models.

\paragraph{Score Modeling for Interacting Particle Systems.}
In this experiment, following \citet{boffi2023probability}, we consider a system of $N=5$ particles $\mby^{(i)}_t \in \mbR^2$ for $t \in [0, 10]$, which evolve as :
\begin{align}\label{eq:ips_sde}
    d & \mby^{(i)}_t = 4B (\beta_t - \mby^{(i)}_t) \norm{\mby^{(i)}_t -\beta_t}_2^2  dt \\ \nonumber
    & \quad + \frac{A}{Nr^2} \sum_{j=1}^N (\mby^{(i)}_t - \mby^{(j)}_t) \exp \left( -\frac{2}{2r^2}\norm{\mby^{(i)}_t - \mby^{(j)}_t}_2^2 \right) dt \\ \nonumber
    & \qquad +  \sqrt{2D} d\mbw_{t}^{(i)}
\end{align}
where $A = 10, r=0.5, a =2, \omega = 1, D = 0.25, B = D/R^2, \gamma = 5, R = \sqrt{\gamma N} r, \beta(t) = a (\cos \pi \omega t, \sin \pi \omega t)$ and $\mby_0^{(i)} \sim \cN(0, \sigma^2_0 I_d)$ with $\sigma_0 = 0.5$. We train with the local \gls{dsm} and \gls{ism} objectives. We train both models with a batch size of 1024 for $10,000$ gradient steps using AdamW. We use a three-layer feedforward network with a hidden size of 256.

\begin{figure}[ht]
        \centering
        \includegraphics[width=\columnwidth]{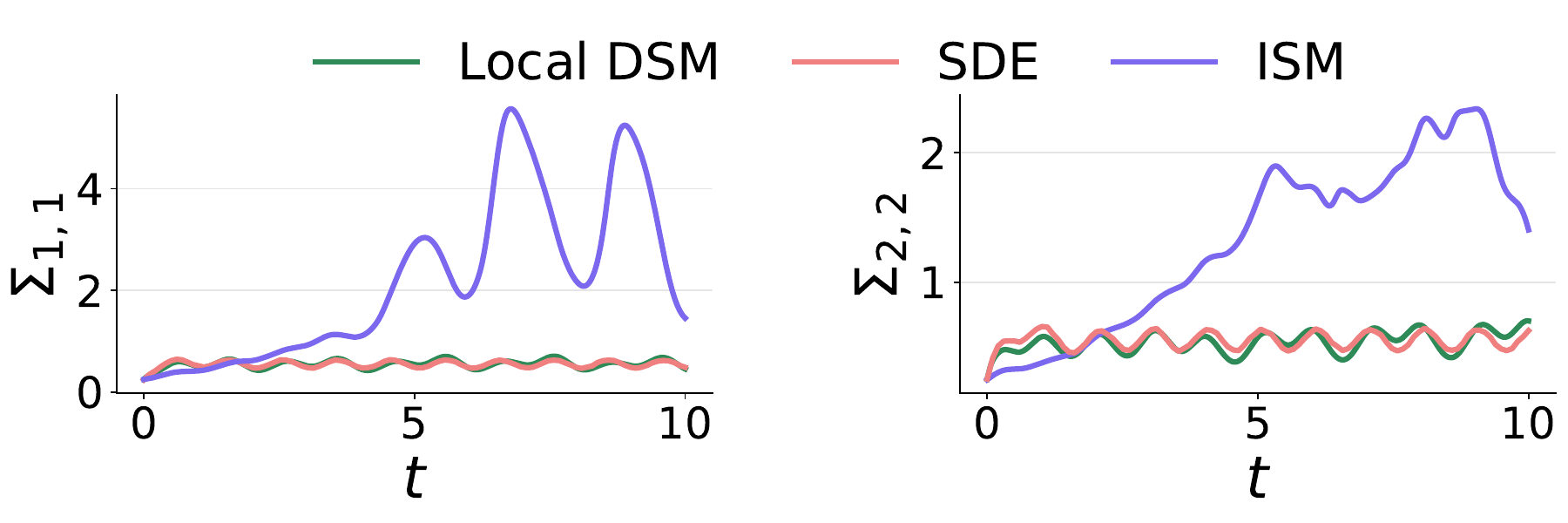}
    \caption{\textbf{Sample variance at $t \in [0, 10]$}. Here we plot the variance of the individual components of the first particle $\mby_t^{(1)}$ simulated using the diffusion process defined in \cref{eq:ips_sde} (\gls{sde}) and the local \gls{dsm} and \gls{ism} \gls{pf-ode}. We observe that the local \gls{dsm} trained model is more faithful to the ground truth compared to the \gls{ism} trained model.}
        \label{fig:ips_cov}
\end{figure}

In \cref{fig:ips_cov}, we plot the variance of the components of the first particle $\mby^{(1)}_t$ for $t \in [0, 10]$. We plot the variance of the samples generated using the process in \cref{eq:ips_sde} as well as samples from the \gls{pf-ode} for local \gls{dsm} and \gls{ism} trained models. In \cref{fig:ips_mmd} in \cref{sec:ips_appendix}, we plot the \gls{mmd} \citep{smola2006maximum} of the local \gls{dsm} and \gls{ism} samples compared to the diffusion process samples. Both comparisons show that the local \gls{dsm} trained model samples are more faithful to the diffusion process compared to the \gls{ism} trained model.

\section{Discussion}
This work presents algorithms for training diffusion-based generative modeling with nonlinear inference processes. First, we introduce the local-\gls{dsm} variational lower bound that is amenable to approximations where computation can be automated. We show how to build approximations using locally linear processes and derive automated approaches to compute the transition score function needed in the local-\gls{dsm} objective. 
To control the error introduced in the locally linear approximation, we design pairs $(s(t), t)$ such that the estimation error remains well-behaved for larger values of $t$. The experiments show that using the local-\gls{dsm} objective leads to faster training and has better sample quality compared to \gls{ism}, for generative modeling as well as score estimation for physical systems. This work advances the computational frontier for working with nonlinear inference processes.

\section*{Impact Statement}
Diffusion models can be used to generate high-resolution realistic images, we along with other researchers in the field take seriously that we should monitor the data used to train these models along with what they are used for.

\section*{Acknowledgements}
This work was partly supported by the \textsc{NIH/NHLBI } Award R01HL148248, \textsc{NSF} Award 1922658 \textsc{NRT-HDR: FUTURE} Foundations, Translation, and Responsibility for Data Science, \textsc{NSF CAREER} Award 2145542, \textsc{ONR} N00014-23-1-2634, and Apple.

\bibliography{main}
\bibliographystyle{icml2024}

\newpage
\appendix
\onecolumn

\section{Local DSM}\label{sec:local_dsm}
Suppose we have an inference diffusion process of the form:
\begin{align}
    d\mby_t = f(\mby_t) dt + g(t) d\mbw_t
\end{align}
where $\mby_0 \sim \qdata$ and the model process is defined as:
\begin{align}
    d\mbz_t = [gg^\top(T - t) s_\theta(\mbz_t, T-t) - f(\mbz_t, T - t)] dt + g(T - t) d\mbw_t
\end{align}
where $\mbz_0 \sim \pi_\theta$. \citet{huang2021variational,song2020improved} derive a variational lower bound on the model log-likelihood $\log p_\theta(x)$:
\begin{align}\label{eq:ct_elbo}
    \log p_\theta(x) \geq \E_{q(\mby_T \mid x)} \log \pi_\theta(\mby_T)
     + \int_0^T \E_{q(\mby \mid x)} \Big[   -\frac{1}{2}\norm{s_\theta(\mby_t, t)}_{gg^\top(t)}^2 - \nabla_{\mby_t} \cdot (gg^\top (t) s_\theta(\mby_t, t) - f(\mby_t, t))  dt \Big] 
\end{align}
Next, we prove \cref{lemma:ism_dsm_ts}, restated here for convenience. \Cref{lemma:ism_dsm_ts} converts the \gls{ism} \gls{elbo} into the \gls{dsm} \gls{elbo} using transition kernels $q(\mby_t \mid \mby_s)$.
\begin{lemma*}
    Let $q(\mby_s \mid x), q(\mby_t \mid \mby_s)$ be the transition kernels of the process defined in \cref{eq:generic_sde}. For any $0 \leq s < t < T$, we have:
    \begin{align}\nonumber
      \E_{q(\mby_t \mid x)} &\frac{1}{2} \left[\norm{s_\theta(\mby_t, t)}_{gg^\top}^2 + \nabla_\mby \cdot gg^\top s_\theta(\mby_t, t)\right]  \\ \nonumber
      \quad & = \E_{q(\mby_t, \mby_s \mid x)} \Big[\frac{1}{2}  \norm{s_\theta(\mby_t, t) - \nabla_\mby \log q(\mby_t \mid \mby_s)}_{gg^\top}^2 -\frac{1}{2} \norm{\nabla_\mby \log q(\mby_t \mid \mby_s)}_{gg^\top}^2 \Big] .
    \end{align}
    where $q(\mby_t, \mby_s \mid x) = q(\mby_t \mid \mby_s)q(\mby_s \mid x)$.
\end{lemma*}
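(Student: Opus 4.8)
The plan is to combine two ingredients: a Chapman--Kolmogorov marginalization that inserts the intermediate variable $\mby_s$ for free, and the classical Vincent-style integration-by-parts identity that converts an implicit score-matching integrand into a denoising one. Conceptually this is exactly the usual \gls{ism}-to-\gls{dsm} argument, with $\mby_s$ playing the role that the clean data point $\mby_0 = x$ plays in the standard derivation.

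For the first equality I would use the Markov property of \cref{eq:generic_sde}: for any $0 \le s < t$, $q(\mby_t \mid x) = \int q(\mby_t \mid \mby_s)\, q(\mby_s \mid x)\, d\mby_s$. Since the integrand $\frac12\norm{s_\theta(\mby_t,t)}_{gg^\top}^2 + \nabla_{\mby_t}\cdot gg^\top s_\theta(\mby_t, t)$ depends only on $\mby_t$ (and $t$), integrating it against $q(\mby_t, \mby_s \mid x) = q(\mby_t \mid \mby_s)q(\mby_s \mid x)$ and then marginalizing out $\mby_s$ returns the expectation against $q(\mby_t \mid x)$; so the first line equals the second line with no computation. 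For the second equality I would first expand the quadratic form on the right, using symmetry of $gg^\top$ and writing $s_q := \nabla_{\mby_t}\log q(\mby_t \mid \mby_s)$:
\[
  \tfrac12\norm{s_\theta - s_q}_{gg^\top}^2 - \tfrac12\norm{s_q}_{gg^\top}^2 \;=\; \tfrac12\norm{s_\theta}_{gg^\top}^2 - s_\theta^\top gg^\top s_q .
\]
Comparing with the (already-established) second line, it then suffices to prove the cross-term identity
\[
  \E_{q(\mby_t,\mby_s\mid x)}\!\left[\nabla_{\mby_t}\cdot\big(gg^\top s_\theta(\mby_t,t)\big)\right] \;=\; -\,\E_{q(\mby_t,\mby_s\mid x)}\!\left[ s_\theta(\mby_t,t)^\top gg^\top \nabla_{\mby_t}\log q(\mby_t\mid\mby_s)\right].
\]
I would prove this by conditioning on $\mby_s$ (note $q(\mby_t \mid \mby_s, x) = q(\mby_t\mid\mby_s)$ by the Markov property, so $x$ drops out of the inner integral), integrating by parts in $\mby_t$, and using $\nabla_{\mby_t} q(\mby_t \mid \mby_s) = q(\mby_t\mid\mby_s)\,\nabla_{\mby_t}\log q(\mby_t\mid\mby_s)$; taking the outer expectation over $q(\mby_s \mid x)$ then closes the argument.

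The only real obstacle is the technical justification of the integration by parts: one must check that the boundary term $q(\mby_t\mid\mby_s)\,gg^\top s_\theta(\mby_t,t)$ vanishes at infinity and that all the expectations appearing are finite. This is precisely where the smoothness, positivity, and integrability hypotheses on $f, g, \qdata$ from \cref{sec:regularity_assumptions} are invoked: they ensure $q(\mby_t\mid\mby_s)$ is smooth, strictly positive, and decays fast enough (together with mild growth control on $s_\theta$) for Fubini and the divergence theorem to apply. Everything outside that regularity bookkeeping is elementary algebra and the Markov property.
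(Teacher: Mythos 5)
Your proposal is correct and follows essentially the same route as the paper's proof: marginalize in $\mby_s$ via the Markov property, then use the Vincent-style integration-by-parts identity to trade the divergence term $\nabla_{\mby_t}\cdot(gg^\top s_\theta)$ for the cross term $-s_\theta^\top gg^\top \nabla_{\mby_t}\log q(\mby_t\mid\mby_s)$ conditionally on $\mby_s$. The only cosmetic difference is that you expand the right-hand side and match the cross term, whereas the paper expands $\norm{s_\theta}^2$ on the left and cancels the divergence; the substance is identical.
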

\begin{proof}
    Let $F(x, t)$ be defined as:
    \begin{align}
        F(x, t) &=  \E_{q(\mby_t \mid x)} \left[ \frac{1}{2}  \norm{s_\theta}_{gg^\top}^2 + \nabla_\mby \cdot gg^\top s_\theta(\mby_t, t) \right]
    \end{align}
    Then note that we can use the Markov property ($q(\mby_t, \mby_s \mid x) = q(\mby_t \mid \mby_s) q(\mby_s \mid x)$) as follows:
    \begin{align}
        \E_{q(\mby_t \mid x)} \left[ \frac{1}{2} \norm{s_\theta}_{gg^\top}^2 + \nabla_\mby \cdot gg^\top s_\theta(\mby_t, t) \right] &=  \E_{q(\mby_t, \mby_s \mid x)} \left[ \frac{1}{2} \norm{s_\theta}_{gg^\top}^2 + \nabla_\mby \cdot gg^\top s_\theta(\mby_t, t) \right] \\ \label{eq:ism_dsm_conversion_eq1}
        &= \E_{q(\mby_s \mid x)} \left[ \E_{q(\mby_t \mid \mby_s)} \left[ \frac{1}{2} \norm{s_\theta}_{gg^\top}^2 + \nabla_\mby \cdot gg^\top s_\theta(\mby_t, t)\right] \right]
    \end{align}
    Next, we convert the \gls{ism} objective to the \gls{dsm} objective as follows:
    \begin{align}
        & \E_{q(\mby_t \mid \mby_s)} \left[ \norm{s_\theta}_{gg^\top}^2 \right] \\
        & \qquad = \E_{q(\mby_t \mid \mby_s)} \left[ \norm{s_\theta 
- \nabla_\mby \log q(\mby_t \mid \mby_s)}_{gg^\top}^2 -\norm{ \nabla_\mby \log q(\mby_t \mid \mby_s)}_{gg^\top}^2 + 2\left(gg^\top s_\theta\right)^\top \nabla_\mby \log q(\mby_t \mid \mby_s) \right]
    \end{align} 
    The last term $\E_{q(\mby_t \mid \mby_s)}[\left(gg^\top s_\theta\right)^\top \nabla_\mby \log q(\mby_t \mid \mby_s)]$ is equal to $\E_{q(\mby_t \mid \mby_s)}[-\nabla_\mby \cdot gg^\top s_\theta]$ using integration by parts, such that we get:
    \begin{align}
        & \E_{q(\mby_t \mid \mby_s)} \left[ \norm{s_\theta}_{gg^\top}^2 \right] \\
        & \qquad = \E_{q(\mby_t \mid \mby_s)} \left[ \norm{s_\theta 
- \nabla_\mby \log q(\mby_t \mid \mby_s)}_{gg^\top}^2 -\norm{ \nabla_\mby \log q(\mby_t \mid \mby_s)}_{gg^\top}^2 - 2\nabla_\mby \cdot gg^\top s_\theta \right]
    \end{align} 
    Combining the last equation with \cref{eq:ism_dsm_conversion_eq1}, the divergence term gets cancelled out:
    \begin{align}
       & \E_{q(\mby_t \mid \mby_s)} \left[\frac{1}{2}  \norm{s_\theta}_{gg^\top}^2 + \nabla_\mby \cdot gg^\top s_\theta(\mby_t, t)\right] \\
       &\qquad =  \E_{q(\mby_t \mid \mby_s)} \left[ \frac{1}{2}  
       \norm{s_\theta - \nabla_\mby \log q(\mby_t \mid \mby_s)}_{gg^\top}^2 
       -\frac{1}{2}\norm{ \nabla_\mby \log q(\mby_t \mid \mby_s)}_{gg^\top}^2 - \nabla_\mby \cdot gg^\top s_\theta + \nabla_\mby \cdot gg^\top s_\theta \right] \\
    &= \E_{q(\mby_t \mid \mby_s)} \left[ \frac{1}{2} \norm{s_\theta 
- \nabla_\mby \log q(\mby_t \mid \mby_s)}_{gg^\top}^2 -\frac{1}{2}\norm{ \nabla_\mby \log q(\mby_t \mid \mby_s)}_{gg^\top}^2 \right]
    \end{align}
    Finally, we get:
    \begin{align}\nonumber
        &= \E_{q(\mby_t \mid x)} \left[ \frac{1}{2} \norm{s_\theta}_{gg^\top}^2 + \nabla_\mby \cdot \left( gg^\top s_\theta(\mby_t, t)\right) \right]
        \\
        &= \E_{q(\mby_s \mid x)} \Bigg[ \E_{q(\mby_t \mid \mby_s)} \left[ \frac{1}{2} \norm{s_\theta - \nabla_\mby \log q(\mby_t \mid \mby_s)}_{gg^\top}^2 - \norm{\nabla_\mby \log q(\mby_t \mid \mby_s)}_{gg^\top}^2 \right] \Bigg] \\ \label{eq:local_dsm_integrand}
        &= \E_{q(\mby_t, \mby_s \mid x)} \Big[ \norm{s_\theta - \nabla_\mby \log q(\mby_t \mid \mby_s)}_{gg^\top}^2 - \frac{1}{2} \norm{\nabla_\mby \log q(\mby_t \mid \mby_s)}_{gg^\top}^2 \Big]
    \end{align}
\end{proof}

Now, using \cref{lemma:ism_dsm_ts}, we derive the local \gls{dsm} \gls{elbo} for a schedule $s(t)$ which satisfies $0 \leq s(t) < t$ for all $t \in (0, T)$. 
\begin{theorem*}
    Let $q(\mby_t \mid \mby_s)$ be the transition kernel of the process in \cref{eq:generic_sde} and $s(t)$ be a schedule, satisfying $0 \leq s(t) < t$  for all $t \in (0, T]$. Then for a model process $\mbz_t$ defined in \cref{eq:diffusion_model}, we can lower bound the model log-likelihood as follows:
    \begin{align}\nonumber
        \log p_\theta(x) &\geq \E_{q(\mby_T \mid x)} \left[ \log \pi_\theta(\mby_T) \right] \\
     & + \int_0^T \E_{q(\mby_t, \mby_s \mid x)} \Big[   - \frac{1}{2} \norm{s_\theta - \nabla_\mby \log q(\mby_t \mid \mby_s)}_{gg^\top}^2 +  \frac{1}{2}\norm{\nabla_\mby \log q(\mby_t \mid \mby_s)}_{gg^\top}^2 + \nabla_\mby \cdot f    dt  \Big]  
    \end{align}
    where $s = s(t)$ and $q(\mby_t, \mby_s \mid x) = q(\mby_t \mid \mby_s) q(\mby_s \mid x)$ due to the Markov property.
\end{theorem*}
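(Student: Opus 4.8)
The plan is to derive the local-\gls{dsm} \gls{elbo} directly from the \gls{ism} \gls{ct-elbo} of \citet{huang2021variational,song2020improved}, already recorded in \cref{eq:ct_elbo}, by rewriting its time-$t$ integrand term by term. The starting point is
\begin{align*}
  \log p_\theta(x) \geq \E_{q(\mby_T \mid x)}\!\left[\log \pi_\theta(\mby_T)\right] + \int_0^T \E_{q(\mby_t \mid x)} \Big[ -\tfrac{1}{2}\norm{s_\theta(\mby_t,t)}_{gg^\top}^2 - \nabla_{\mby_t}\cdot\big(gg^\top s_\theta - f\big)(\mby_t,t) \Big]\,dt .
\end{align*}
First I would use linearity of the divergence together with the standing assumption that $g$ depends on $t$ alone (so $gg^\top(t)$ is spatially constant) to split $\nabla_{\mby_t}\cdot(gg^\top s_\theta - f) = \nabla_{\mby_t}\cdot(gg^\top s_\theta) - \nabla_{\mby_t}\cdot f$. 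This isolates the \gls{ism} score-matching functional $\cL_{\gls{ism}}(x,t) = \E_{q(\mby_t\mid x)}\big[\tfrac12\norm{s_\theta}_{gg^\top}^2 + \nabla_{\mby_t}\cdot gg^\top s_\theta\big]$ inside the integrand, leaving the residual term $+\,\E_{q(\mby_t\mid x)}[\nabla_{\mby_t}\cdot f]$.

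Next I would apply \cref{lemma:ism_dsm_ts} with the particular value $s = s(t)$. Because the schedule satisfies $0 \le s(t) < t < T$ for every $t \in (0,T]$, the lemma's hypotheses hold pointwise in $t$, so it lets me replace $\cL_{\gls{ism}}(x,t)$ by $\E_{q(\mby_t,\mby_s\mid x)}\big[\tfrac12\norm{s_\theta - \nabla_{\mby_t}\log q(\mby_t\mid\mby_s)}_{gg^\top}^2 - \tfrac12\norm{\nabla_{\mby_t}\log q(\mby_t\mid\mby_s)}_{gg^\top}^2\big]$, where the Markov property gives $q(\mby_t,\mby_s\mid x) = q(\mby_t\mid\mby_s)\,q(\mby_s\mid x)$. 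Using the tower rule to rewrite $\E_{q(\mby_t\mid x)}[\nabla_{\mby_t}\cdot f]$ as $\E_{q(\mby_t,\mby_s\mid x)}[\nabla_{\mby_t}\cdot f]$ as well, the time-$t$ integrand becomes
\begin{align*}
  \E_{q(\mby_t,\mby_s\mid x)}\Big[ -\tfrac12\norm{s_\theta - \nabla_{\mby_t}\log q(\mby_t\mid\mby_s)}_{gg^\top}^2 + \tfrac12\norm{\nabla_{\mby_t}\log q(\mby_t\mid\mby_s)}_{gg^\top}^2 + \nabla_{\mby_t}\cdot f \Big] .
\end{align*}
Integrating this identity over $t \in [0,T]$ and keeping the unchanged prior term $\E_{q(\mby_T\mid x)}[\log\pi_\theta(\mby_T)]$ yields exactly \cref{eq:local_dsm_elbo}.

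I do not expect a deep obstacle: the genuinely non-mechanical content — the integration by parts that cancels the $\nabla_{\mby_t}\cdot gg^\top s_\theta$ term against itself — lives inside the proof of \cref{lemma:ism_dsm_ts}, which I may assume. The only points requiring a little care are (i) that $s(t) < t$ forces $s(t) \to 0$ as $t \to 0$, so no singularity worse than the one already present in the \gls{ism} \gls{elbo} is introduced; and (ii) justifying the interchange of the time integral with the expectations and the pointwise-in-$t$ application of the lemma, which follows from measurability of $t \mapsto s(t)$ together with the smoothness and integrability hypotheses of \cref{sec:regularity_assumptions} guaranteeing that $q(\mby_s\mid x)$ and $q(\mby_t\mid\mby_s)$ exist, are smooth, and decay fast enough for the boundary terms in the integration by parts to vanish. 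Everything else is bookkeeping on the \gls{ism} \gls{elbo}.
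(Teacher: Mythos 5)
Your proposal is correct and follows essentially the same route as the paper: both take the \gls{ism} \gls{ct-elbo} as the starting point, use the Markov property to pass to the joint $q(\mby_t,\mby_s\mid x)$, and invoke \cref{lemma:ism_dsm_ts} pointwise in $t$ with $s=s(t)$ to swap the \gls{ism} integrand for the local-\gls{dsm} one, leaving the $\nabla_{\mby_t}\cdot f$ term untouched. Your added remarks on splitting the divergence, the tower rule, and the measurability of $t\mapsto s(t)$ are just more explicit bookkeeping of steps the paper leaves implicit.
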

\begin{proof}
     Using the Markov property, the integrand in the \gls{ism} \gls{elbo} can be written as:
     \begin{align}\nonumber
         \int_0^T \E_{q(\mby_t \mid x)} \Big[   -\frac{1}{2}\norm{s_\theta}_{gg^\top}^2 - \nabla_\mby \cdot (gg^\top s_\theta - f)  dt \Big]   &= \int_0^T \E_{q(\mby_t, \mby_s \mid x)} \Big[   -\frac{1}{2}\norm{s_\theta}_{gg^\top}^2 - \nabla_\mby \cdot (gg^\top s_\theta - f)  dt \Big] \\ \nonumber
         &= \int_0^T \E_{q(\mby_s \mid x)} \Bigg[ \E_{q(\mby_t \mid \mby_s)} \left[  -\frac{1}{2}\norm{s_\theta}_{gg^\top}^2 - \nabla_\mby \cdot (gg^\top s_\theta - f) \right]  dt \Bigg]
     \end{align}
     Using \cref{lemma:ism_dsm_ts}, which shows that the \gls{ism} integrand is equal to the local \gls{dsm} integrand in \cref{eq:local_dsm_integrand}, we can convert the \gls{ism} \gls{elbo} as follows::
     \begin{align}
         & \E_{q(\mby_T \mid x)} \left[ \log \pi_\theta(\mby_T) \right] +  \int_0^T \E_{q(\mby_t \mid x)} \Big[   -\frac{1}{2}\norm{s_\theta}_{gg^\top}^2 - \nabla_\mby \cdot (gg^\top s_\theta - f)  dt \Big]  \\ \nonumber
         &= \E_{q(\mby_T \mid x)} \left[ \log \pi_\theta(\mby_T) \right] + \int_0^T  \E_{q(\mby_t, \mby_s \mid x)} \Big[\frac{1}{2}  \norm{s_\theta - \nabla_\mby \log q(\mby_t \mid \mby_s)}_{gg^\top}^2 - \frac{1}{2} \norm{\nabla_\mby \log q(\mby_t \mid \mby_s)}_{gg^\top}^2 + \nabla_\mby \cdot f \Big]
     \end{align}     
     Therefore, we get:
     \begin{align}\nonumber
         \log p_\theta(x) \geq \E_{q(\mby_T \mid x)} \left[ \log \pi_\theta(\mby_T) \right]
     + & \int_0^T \E_{q(\mby_t, \mby_s \mid x)} \Big[   - \frac{1}{2} \norm{s_\theta - \nabla_\mby \log q(\mby_t \mid \mby_s)}_{gg^\top}^2 \\
     & \qquad +  \frac{1}{2}\norm{\nabla_\mby \log q(\mby_t \mid \mby_s)}_{gg^\top}^2 + \nabla_\mby \cdot f    dt  \Big]  
     \end{align}
\end{proof}

\section{Valid ELBO with Truncation}\label{sec:truncated_elbo}
For numerical stability, the integral term in the \gls{elbo} is truncated below by $t_{\text{min}} = \delta$. This leads to a biased estimate. \citet{sohl2015deep,song2020improved} provide a valid \gls{elbo} by using a variational lower bound:
\begin{align}
    \log p_\theta(\mby_0) \geq \E_{q(\mby_\delta \mid \mby_0)} \left[ \log \frac{p_\theta(\mby_0 \mid \mby_{\delta})}{q(\mby_\delta \mid \mby_0)} + \log p_\theta(\mby_\delta) \right] ,
\end{align}
where the choice of the likelihood $\log q(\mby_\delta \mid \mby_0)$ is up to the user. The term $\E_{q(\mby_\delta \mid \mby_0)} \log p_\theta(\mby_\delta)$ can be lower bounded similar to \citet{song2020improved}:
\begin{align}
    \E_{q(\mby_\delta \mid \mby_0)} \left[ \log p_\theta(\mby_\delta) \right] &\geq  \E_{q(\mby_T \mid x)} \log \pi_\theta(\mby_T)
     + \int_\delta^T \E_{q(\mby \mid x)} \Big[   -\frac{1}{2}\norm{s_\theta}_{gg^\top}^2 - \nabla_\mby \cdot (gg^\top s_\theta - f)  dt \Big]  
\end{align}
see theorem 6 in \citet{song2020improved}. Then using \cref{lemma:ism_dsm_ts}, we note that:
\begin{align}\nonumber
    &\E_{q(\mby_T \mid x)} \left[ \log \pi_\theta(\mby_T) \right]
     + \int_\delta^T \E_{q(\mby \mid x)} \Big[   -\frac{1}{2}\norm{s_\theta}_{gg^\top}^2 - \nabla_\mby \cdot (gg^\top s_\theta - f)  dt \Big]   \\
     &\qquad = \E_{q(\mby_T \mid x)} \left[ \log \pi_\theta(\mby_T) \right] \\     
     & \qquad \qquad + \int_\delta^T \E_{q(\mby_t, \mby_s \mid x)} \Big[- \frac{1}{2} \norm{s_\theta - \nabla_\mby \log q(\mby_t \mid \mby_s)}_{gg^\top}^2 +  \frac{1}{2}\norm{\nabla_\mby \log q(\mby_t \mid \mby_s)}_{gg^\top}^2 + \nabla_\mby \cdot f    dt  \Big]  
\end{align}

Now, we choose $q(\mby_\delta \mid \mby_0) = \cN(\mby_\delta \mid \mbA \mby_0 + \mbc, \Sigma(\delta|0))$ and covariance of the locally linear process on the interval $[0, \delta]$. Next, similar to \citet{song2020improved} we choose $p_\theta(\mby_0 \mid \mby_\delta)$ to be Gaussian with mean and covariance derived using Tweedie's formula \citep{efron2011tweedie}. $\mu_\theta = \E[\mby_0 \mid \mby_\delta]$and $\Sigma_\theta = \text{Var}[\mby_0\mid \mby_\delta]$, which are derived below. 

First, we derive the conditional variance:
\begin{align}
    \mby_\delta &= A \mby_0 + \mbc + \Sigma^{-1/2}(\delta \mid 0) \mbz, \quad \text{where } \mbz \sim \cN(0, I_d) \\
    \mby_0 &= \mbA^{-1} \left(\mby_\delta - c - \Sigma^{-1/2}(\delta \mid 0) \mbz
    ) \right) \\
    \text{Var}(\mby_0 \mid \mby_\delta) &= \mbA^{-1} P(\delta \mid 0) \mbA^{-\top}
\end{align}
then note that the conditional mean can be derived using Tweedie's formula as follows: let $\eta$ be the natural parameter of the Gaussian distribution $\cN(\mby_\delta \mid \mbA \mby_0 + \mbc, \Sigma(\delta|0))$, then we the fact that \citep{efron2011tweedie}
\begin{align}
    \E[\eta \mid \mby_\delta] &= s_\theta(\mby_\delta, \delta) + \Sigma^{-1}(\delta \mid 0) \mby_\delta
\end{align}    
and the definition of the natural parameter $\eta$, $\eta = P(\delta \mid 0)^{-1} (\mbA \mby_0 + \mbc)$ to get
\begin{align}
    \E[\mby_0 \mid \mby_\delta] &= \mbA^{-1} (\Sigma(\delta|0) \E[\eta \mid \mby_\delta] - c) \\
    &= \mbA^{-1}(P(\delta \mid 0) s_\theta(\mby_\delta, \delta) + \mby_\delta - c) \\
    &= \mbA^{-1}(\mby_\delta - c) + \mbA^{-1}P(\delta \mid 0) s_\theta(\mby_\delta, \delta)
\end{align}
See page 26-27 in \citet{singhal2023diffuse} for a full derivation. 

\section{Local Linearization}\label{sec:local_linearization}
Suppose we have diffusions of the form
\begin{align*}
    d\mby_t = f(\mby_t, t) dt + g(t) d\mbw_t
\end{align*}
where $f$ is a non-linear function of $y$. For every any $s$, we linearize $f$ around the sample $\mby_{s}$ such that 
\begin{align}\label{eq:taylor_sde}
    d\hat{\mby}_t = \cT_s f({\hat{\mby}}_t, t) dt + g(t) d\mbw_t, \qquad t \in [s, T]
\end{align}
where $\cT_s$ is a operator that produces a linear approximation of $f$, such that $\cT_{\hat{\mby}_{s}} f = c(t) + A(t) {\hat{\mby}}_t$. 
Since the drift is affine, \cref{eq:taylor_sde} is a linear diffusion process with a Gaussian transition kernel (see eq 6.5 in \citet{sarkka2019applied}). 

To derive the mean and covariance of the process ${\hat{\mby}}_t$ we use Ito's lemma (see theorem 4.2 in \citep{sarkka2019applied}): for a scalar function $F(t, \mby_t)$, we have
\begin{align}
    dF &= \partial_t F dt + d\hat{\mby}^\top \nabla_{\hat{\mby}}F + \sum_{i, j =1}^d \frac{1}{2} \partial_{\hat{\mby}_i} \partial_{\hat{\mby}_j} F d\hat{\mby}_i d\hat{\mby}_j \\
    &= \partial_t F dt + f^\top \nabla_{\hat{\mby}} F dt + g^\top \nabla_{\hat{\mby}}F d\mbw_t + \sum_{i, j =1}^d \frac{1}{2} \partial_{\hat{\mby}_i} \partial_{\hat{\mby}_j} F (gg^\top)_{i, j} dt \\
    &= \left[\partial_t F + f^\top \nabla_{\hat{\mby}} F + \sum_{i, j =1}^d \frac{1}{2} \partial_{\hat{\mby}_i} \partial_{\hat{\mby}_j} F (gg^\top)_{i, j} \right] dt + + g^\top \nabla_{\hat{\mby}}F d\mbw_t
\end{align}
where we use the fact that $dt \times dt= 0, dt \times d\mbw_t = 0$. Next, we take the expectation
\begin{align}
    d\E[F(\hat{\mby}_t) \mid \mby_s] &= \E \left(\partial_t F + f^\top \nabla_\mby F + \sum_{i, j =1}^d \frac{gg^\top(t)_{i,j}}{2} \partial_{\mby_i} \partial_{\mby_j} F \mid \mby_s \right)dt + \E \left[ g(t)^\top \partial_{\mby} F d\mbw_t \mid \mby_s \right] \\  
    \frac{d}{dt} \E[F(\hat{\mby}_t) \mid \mby_s] &= \E \left(\partial_t F + f^\top \nabla_\mby F + \sum_{i, j =1}^d \frac{gg^\top(t)_{i,j}}{2} \partial_{\mby_i} \partial_{\mby_j} F \mid \mby_s \right)
\end{align}
now, for computing the mean and covariance we use Ito's lemma on the functions: $F(t, \hat{\mby}) = [\hat{\mby}_t]_i$ for the mean and $F_{i, j}(t, \hat{\mby}) = [\mby_t]_i [\mby_t]_j - \E[\hat{\mby}_t \mid \mby_s]_i \E[\hat{\mby}_t \mid \mby_s]_j$ for all $1 \leq i, j \leq d$ and therefore we can get the evolution of the mean and covariance \glspl{ode}.

The mean $m(t|s) = \E[\hat{\mby}_t \mid \hat{\mby}_s]$ and covariance $P(t|s) = \E[({\hat{\mby}}_t - m(t|s))({\hat{\mby}}_t - m(t|s))^\top \mid {\hat{\mby}}_s]$ obey the following \glspl{ode} (see eq 5.50-5.51 in \citet{sarkka2019applied}):
\begin{align}\nonumber
    \frac{d}{dt}m(t \mid s) &= \E[\mathcal{T} f({\hat{\mby}}_t, t)\mid {\hat{\mby}}_s] \\ \nonumber
    &= c(t) + A(t) \E [{\hat{\mby}}_t \mid \hat{\mby}_s] \\ \label{eq:mean_ode_linear}    
    &= c(t) + A(t) m(t \mid s)
\end{align}    
Now, to derive the covariance, we first note that for $F_{i, j}(t, \hat{\mby}) = [\mby_t]_i [\mby_t]_j - \E[\hat{\mby}_t \mid \mby_s]_i \E[\hat{\mby}_t \mid \mby_s]_j$, we have:
\begin{align*}
    \frac{d}{dt} \E \left[F_{i, j}(t, \mby_t) \mid \mby_s \right] &= \frac{d}{dt} \E \Bigg[ [\mby_t]_i [\mby_t]_j - \E[\hat{\mby}_t \mid \mby_s]_i \E[\hat{\mby}_t \mid \mby_s]_j  \Big| \mby_s \Bigg] \\
    &= \E \left[\partial_t F_{i,j} + d\mby_t^\top \nabla F_{i,j}(t, \mby_t) \mid \mby_s \right] + \E\left[\sum_{i, j =1}^d \frac{gg^\top(t)_{i,j}}{2} \partial_{\mby_i} \partial_{\mby_j} F_{i,j}  \mid \mby_s \right]\\
    &= \E \left[\partial_t F_{i,j} + d\mby_t^\top \nabla F_{i,j}(t, \mby_t) \mid \mby_s \right] + gg^\top(t)_{i,j} \\
    &= -m_i(t\mid s) \partial_t m_j(t \mid s) - m_j(t\mid s) \partial_t m_i(t \mid s) \\ 
    & \qquad + \E[[\cT f]_i [\mby_t]_j\mid \mby_s] + \E[[\cT f]_j [\mby_t]_i \mid \mby_s]] + gg^\top(t)_{i,j} \\
    &= -m_i(t\mid s) \E[[\cT f]_j \mid \mby_s] - m_j(t\mid s) \E[[\cT f]_i \mid \mby_s] \\
    &\qquad + \E[[\cT f]_i [\mby_t]_j \mid \mby_s] + \E[[\cT f]_j [\mby_t]_i \mid \mby_s] + gg^\top(t)_{i,j} \\
    &= \E[ [\cT f]_i (\mby_t - m(t|s))_j \mid \mby_s] + \E \left[[\cT f]_j (\mby_t - m(t|s))_i \mid \mby_s \right] + gg^\top(t)_{i,j}
\end{align*}
therefore, we get 
\begin{align} \label{eq:cov_ode_linear_pre}
    \frac{d}{dt} P(t \mid s) &=  \E[\mathcal{T} f({\hat{\mby}}_t, t) ({\hat{\mby}}_t - m(t|s))^\top\mid {\hat{\mby}}_s ] + \E[ ({\hat{\mby}}_t - m(t|s)) \mathcal{T} f({\hat{\mby}}_t, t)^\top \mid {\hat{\mby}}_s] + \E[gg^\top(t)\mid {\hat{\mby}}_s]
\end{align}

Now, using the fact that $\E[{\hat{\mby}}_t - m(t|s) \mid {\hat{\mby}}_s] = 0$, we get:
\begin{align}\nonumber
  \E[\mathcal{T} f({\hat{\mby}}_t, t) ({\hat{\mby}}_t - m(t|s))^\top  \mid \hat{\mby}_s] &= \E \left[ \left(c(t) + A(t) {\hat{\mby}}_t \right) ({\hat{\mby}}_t - m(t|s))^\top  \mid \hat{\mby}_s \right] \\ \nonumber
   &= \E[c(t) ({\hat{\mby}}_t - \E[{\hat{\mby}}_t \mid \hat{\mby}_s])^\top +  A(t) \E \left[ {\hat{\mby}}_t ({\hat{\mby}}_t - m(t|s))^\top  \mid \hat{\mby}_s \right] \\ \nonumber
    &= 0 + A(t) \E \left[ {\hat{\mby}}_t ({\hat{\mby}}_t - m(t|s))^\top  \mid \hat{\mby}_s \right], \quad \text{using } \E[{\hat{\mby}}_t - m(t|s) \mid {\hat{\mby}}_s] = 0  \\ \nonumber    
    &= A(t) \E[({\hat{\mby}}_t - m(t|s))({\hat{\mby}}_t - m(t|s))^\top  \mid \hat{\mby}_s] +A(t)m(t|s) \E[{\hat{\mby}}_t - m(t|s)  \mid \hat{\mby}_s]^\top   \\ \nonumber
    &= A(t) \E[({\hat{\mby}}_t - m(t|s))({\hat{\mby}}_t - m(t|s))^\top  \mid \hat{\mby}_s]\\ \nonumber
    &= A(t) P(t  \mid s),
\end{align}
 and similarly, we get $\E \left[ ({\hat{\mby}}_t - m(t|s))\left(c(t) + A(t) {\hat{\mby}}_t \right)^\top \mid \mby_s  \right] = P(t  \mid s)A(t)^\top$. Therefore, \cref{eq:cov_ode_linear_pre} becomes:
\begin{align}\label{eq:cov_ode_linear}
    \frac{d}{dt}P(t | s) &= gg^\top(t) +  A(t) P(t | s) + P(t|s) A(t)^\top 
\end{align}

To get the mean and the covariance for the process conditioned on $\hat{\mby}_{s}$ with $s = s$, we solve:
\begin{align}\label{eq:mean_ode_linear_2}
    \frac{d}{dt}m(t | s) &= c(t) + A(t) m(t|s) \\ \label{eq:cov_ode_linear_2}
    \frac{d}{dt}P(t | s) &= gg^\top(t) +  A(t) P(t | s) + P(t|s) A(t)^\top 
\end{align}
where $m(s|s) =\hat{\mby}_s$ and $P(s|s) = 0$.

In the next section, we derive the solutions to the \glspl{ode}:
\begin{itemize}
    \item In section \cref{sec:mean_cov_T1}, we consider the first-order Taylor expansion $\cT_{\mby_s, s} f(t, \mby_t)$ as the operator $\cT$. Here the matrix $A$ is not a function of time $t$. 
    
    \item In section \cref{sec:mean_cov_T2}, we consider the first-order Taylor expansion $\cT_{\mby_s, t} f(t, \mby_t)$, which provides a more accurate approximation. Here we assume that the drift takes the forms:
    \begin{align}
        f(\mby, t) = (f_1(\mby_1, t), \dots, f_d(\mby_d, t)) \in \mbR^d
    \end{align}
    this causes the inference process $\mby_t$'s coordinates to be independent given $\mby_0$, that is $(\mby_t)_i \perp (\mby_t)_j \mid \mby_s$ for all $i \neq j$.
\end{itemize}

\subsection{Mean And Covariance for Taylor expansion around $f(\mby_s, s)$}\label{sec:mean_cov_T1}
Suppose we have diffusions of the form
\begin{align*}
    d\mby_t = f(\mby_t, t) dt + g(t) d\mbw_t
\end{align*}
where $f$ is a non-linear function of $y$. Then note that we can simulate the density for any interval $[s, T]$, we linearize $f$ around $\mby_{s}, s$ such that 
\begin{align}\label{eq:taylor_sde}
    d{\hat{\mby}}_t = \mathcal{T}_{\hat{\mby}_{s}, s} f({\hat{\mby}}_t, t) dt + g(t) d\mbw_t, \qquad t \in [s, T]
\end{align}
where $\mathcal{T}_{{\mby}_{s}}$ is a operator that produces a linear approximation of $f$, for instance, we can use the a first-order Taylor approximation as follows:
\begin{align*}
    \mathcal{T}_{\mby_{s}, s} f({\hat{\mby}}_t, t) &= f(\mby_{s}, s) + \nabla_{y} f(\mby_{s}, t) ({\hat{\mby}}_t - \mby_{s}) + \partial_t f(\mby_s, s) (t - s) \\
    &= \left( f(\mby_{s}, s) - \nabla_{y} f(\mby_{s}, s) \mby_s + \partial_t f(\mby_s, s) (t - s) \right) + \nabla_{y} f(\mby_{s}, s) {\hat{\mby}}_t \\
    &=: (c_1 + c_2 t) + A{\hat{\mby}}_t \\   \label{eq:xyz_2}
    &=: c(t) + A {\hat{\mby}}_t
\end{align*}
where
\begin{itemize}
    \item $c_1 = f(\mby_s, s) - \nabla_\mby f(\mby_s, s) - \partial_t f(\mby_s, s) s$ 
    \item $c_2 = \partial_t f(\mby_s, s)$ 
    \item $A = \nabla_y f(\mby_s, s)$
\end{itemize} 
and since, \cref{eq:xyz_2} is an affine process, the transition kernel is Gaussian  (see eq 6.5 in \citet{sarkka2019applied}). To sample and compute the score of a Gaussian transition kernel requires solving the mean and covariance \glspl{ode}.

For the mean, we can solve:
\begin{align*}
    \frac{d}{dt} m(t|s) = c(t) + Am(t|s) , \qquad m(s|s) = \mby_s
\end{align*}
which we using the following facts:
\begin{align}
    \exp(tA) \int_{s}^t \exp(-\tau A) c_1 d\tau &= \exp(tA) [-\exp(-\tau A) A^{-1}]_{s}^t c_1  \\
    &=\exp(tA) [\exp(-sA)A^{-1} - \exp(-tA)A^{-1}] c_1 \\
    &=  [\exp((t-s)A)A^{-1} - \exp((t-t)A)A^{-1}] c_1 \\
    &= [\exp((t-s)A)A^{-1} - A^{-1}] c_1 \\
    &= [\exp((t-s)A) - I] A^{-1} c_1 
\end{align}
and using integration by parts and the above integral we get:
\begin{align}
    \int \exp(-\tau A)\tau d\tau &= [\tau \int \exp(-\tau A) - \int \frac{d}{d\tau} \tau \int \exp(-\tau A)] \\
    &= \left[ -\tau \exp(-\tau A)A^{-1} - \int -\exp(-\tau A)A^{-1} \right]\\
    &= \left[ -\tau \exp(-\tau A)A^{-1} - \exp(-\tau A)A^{-2} \right]
\end{align}
Now, using these identities and the general solution to affine linear \glspl{ode} (see eq 2.31 in \citet{sarkka2019applied}) we get that $m(t|s)$ evolves as
\begin{align}
    m(t|s) &= \exp((t-s) A) \mby_s + \int_{s}^t \exp\left( (t - \tau) A \right) c(\tau) d\tau \\
    &= \exp((t-s) A) \mby_s + \exp(tA) \int_{s}^t \exp\left(-\tau A \right) c(\tau) d\tau \\
    &= \exp((t-s) A) \mby_s + \exp(tA) \int_{s}^t \exp\left(-\tau A \right) (c_1 + c_2 \tau) d\tau  \\
    &= \exp((t-s) A) \mby_s + \exp(tA) \int_{s}^t \exp\left(-\tau A \right) c_1 d\tau + \exp(tA) \int_{s}^t \exp\left(-\tau A \right) c_2 \tau d\tau \\
    &= \exp((t-s) A) \mby_s + (\exp((t-s) A) - I)A^{-1} c_1 + \exp(tA) \int_{s}^t \exp\left(-\tau A \right) c_2 \tau d\tau
\end{align}
Now, to integrate the last term, we note that using integration by parts we get on the integrand $\exp(-\tau A) \tau$ we get:
\begin{align}\nonumber
    \int_{s}^t \exp\left(-\tau A \right) \tau d\tau &= \left[ s \exp(-sA) A^{-1} + \exp(-sA)A^{-2} \right]   - \left[t \exp(-tA)A^{-1} + \exp(-tA)A^{-2} \right]   \\ \nonumber
    \exp(tA) \int_{s}^t \exp\left(-\tau A \right)  \tau d\tau &= 
    \left[ s \exp((t-s)A) A^{-1} + \exp((t-s)A)A^{-2} \right]  - \left[t A^{-1} + A^{-2} \right]  \\ \nonumber
    &= \exp((t-s)A) \left[ s  A^{-1} + A^{-2} \right]  - \left[t A^{-1} + A^{-2} \right] 
\end{align}
Finally, the mean:
\begin{align*}
   m(t\mid s) &= \exp((t-s) A) \mby_s + (\exp((t-s) A) - I)A^{-1} c_1 \\
   & \quad + \exp((t-s)A) \left[ s  A^{-1} + A^{-2} \right]c_2  - \left[t A^{-1} + A^{-2} \right]c_2 
\end{align*}

Following \citet{sarkka2019applied,singhal2023diffuse}, we can solve the covariance using the matrix factorization trick. Let $P(t|s) = C_{t|s} H_{t|s}^{-1}$, then $C_{t|s}, H_{t|s}$ evolve as follows:
\begin{align}
    \begin{pmatrix}
        C_{t|s}\\
        H_{t|s}
    \end{pmatrix} &= \exp \begin{pmatrix}
        \int_{s}^t A(\tau) d\tau &  \int_{s}^t gg^\top(\tau) d\tau \\
        0 & - \int_{s}^t A^\top(\tau) d\tau 
    \end{pmatrix}
    \begin{pmatrix}
        C_0\\
        H_0
    \end{pmatrix}
\end{align}
where $C_0 = 0$ and $H_0 = I$. 

\subsection{Mean And Covariance for Taylor expansion around $f(\mby_s, t)$}\label{sec:mean_cov_T2}

Suppose we have a diffusion process of the form
\begin{align*}
    d\mby_t = f(\mby_t, t) dt + g(t) d\mbw_t
\end{align*}
where $f$ is a non-linear function of $y$. 

Here we also assume that $\nabla_{\mby_j}f_i(\mby, t) = 0$ for all $i, j$ where $f = (f_1, \dots, f_d) \in \mbR^{d}$, which implies that conditional on $\mby_s$ for $s \in [0, t)$  the inference process obeys
\begin{align}
    q(\mby_t \mid \mby_s) = \prod_{i=1}^d q([\mby_t]_i \mid [\mby_s]_i)
\end{align}
And since the inference process coordinates $[\mby_t]_i$ and $[\mby_t]_j$ for all $i \neq j$ are independent conditional on $\mby_s$, we treat $m, P$ as \textbf{\textit{scalar values}}. We also note that the matrix $A$ is a function of $t$, unlike the previous section.

Then, similar to \cref{sec:mean_cov_T1}, to simulate the density for any interval $[s, T]$, we linearize $f$ around $\mby_{s}, s$ by defining a linear process: 
\begin{align}\label{eq:taylor_sde}
    d{\hat{\mby}}_t = \mathcal{T}_{\mby_{s}, s} f({\hat{\mby}}_t, t) dt + g(t) d\mbw_t, \qquad t \in [s, T]
\end{align}
where $\mathcal{T}_{\mby_{s}}$ is a operator that produces a linear approximation of $f$, for instance, we can use the a first-order Taylor approximation as follows:
\begin{align*}
    \mathcal{T}_{\mby_{s}, t} f({\hat{\mby}}_t, t) &= f(\mby_{s}, t) + \nabla_{y} f(\mby_{s}, t) ({\hat{\mby}}_t - \mby_{s}) \\
    &= \left( f(\mby_{s}, t) - \nabla_{y} f(\mby_{s}, t)\mby_s \right) + \nabla_{y} f(\mby_{s}, s){\hat{\mby}}_t \\
    &=: c(t) + A(t) {\hat{\mby}}_t 
\end{align*}
here both $c, A$ are a function of $t$. As shown earlier, the transition kernel is Gaussian  (see eq 6.5 in \citet{sarkka2019applied}). To sample and compute the score of a Gaussian transition kernel requires solving the mean and covariance \glspl{ode}.

To solve for $P(t|s) \in \mbR, m(t|s) \in \mbR$, we make use of the matrix exponential technique from eq 6.36-39 in \citet{sarkka2019applied,singhal2023diffuse}. For solving the covariance matrix \gls{ode}, we let $P(t \mid s) = C_{t|s} H_{t|s}^{-1}$ where $C, H$ evolve as follows:
\begin{align}
    \frac{d}{dt} \begin{pmatrix}
        C_{t|s}\\
        H_{t|s}
    \end{pmatrix} &= \begin{pmatrix}
        A(t) & gg^\top \\
        0 & -A^\top(t) 
    \end{pmatrix}
    \begin{pmatrix}
        C_s\\
        H_s
    \end{pmatrix}
\end{align}
where $C_0 = 0$ and $H_0 = I$. Now, since $C, H$ evolve linearly, we can solve them using matrix exponentials.
\begin{align}
    \begin{pmatrix}
        C_{t|s}\\
        H_{t|s}
    \end{pmatrix} &= \exp \begin{pmatrix}
        \int_{s}^t A(t) &  \int_{s}^t gg^\top(t) \\
        0 & - \int_{s}^t A^\top(t) 
    \end{pmatrix}
    \begin{pmatrix}
        C_0\\
        H_0
    \end{pmatrix} \\
    \frac{d}{dt} 
    \begin{pmatrix}
        C_{t|s}\\
        H_{t|s}
    \end{pmatrix} &= \begin{pmatrix}
        A(t) & gg^\top \\
        0 & -A^\top(t) 
    \end{pmatrix}
    \begin{pmatrix}
        C_s\\
        H_s
    \end{pmatrix} \\
    P(t|s) &= C_{t|s} H_{t|s}^{-1}
\end{align}  

\paragraph{Mean \gls{ode} solution.} To solve the mean \gls{ode}:
\begin{align*}
 \frac{d}{dt}m(t | s) &= c(t) + A(t) m(t|s) 
\end{align*}
and since $A(t)$ is a scalar, $A = A^\top$, therefore we can use the same matrix exponential technique as we used for the covariance matrix. Let $m(t|s) = D_{t|s} R_{t|s}^{-1}$, where $D, R$ evolve as:
\begin{align}
\frac{d}{dt} \begin{pmatrix}
        D_{t|s}\\
        R_{t|s}
    \end{pmatrix} &= \begin{pmatrix}
        \frac{1}{2}A(t) & c(t) \\
        0 & -\frac{1}{2}A^\top(t) 
    \end{pmatrix}
    \begin{pmatrix}
        D_{t|s}\\
        R_{t|s}
    \end{pmatrix} .
\end{align}
Here, the factorization $m(t|s) = D_{t|s} R^{-1}_{t|s}$ holds as:
\begin{align}
    \frac{d}{dt} D_{t|s} R^{-1}_{t|s} &= R^{-1}_{t|s} \frac{d}{dt} D_{t|s} + D_{t|s} \frac{d}{dt} R^{-1}_{t|s} \\
    &= R^{-1}_{t|s} \left( \frac{1}{2}A(t) D_{t|s} + c(t) R_{t|s} \right) + D_{t|s} \frac{-1}{R^{2}_{t|s}} \frac{d}{dt} R_{t|s} \\
    &= R^{-1}_{t|s} \left( \frac{1}{2}A(t) D_{t|s} + c(t) R_{t|s} \right) + D_{t|s} \frac{-1}{R^{2}_{t|s}} \frac{-1}{2} A(t) R_{t|s} \\
    &=  \left( \frac{1}{2}A(t) D_{t|s} R^{-1}_{t|s} + c(t) \right) + D_{t|s} \frac{1}{R_{t|s}} \frac{1}{2} A(t) \\
    &= \left( \frac{1}{2}A(t) D_{t|s} R^{-1}_{t|s} + c(t) \right) + \frac{1}{2} D_{t|s} R^{-1}_{t|s} A(t) \\
    &= \frac{1}{2}A(t) D_{t|s} R^{-1}_{t|s} + c(t) + \frac{1}{2} D_{t|s} R^{-1}_{t|s} A(t) \\
    &= A(t) D_{t|s} R^{-1}_{t|s} + c(t) \\
    &= A(t) m(t|s) + c(t)
\end{align}

Now,  we can solve for $R_s, P_s$ in closed-form as
\begin{align}
    \begin{pmatrix}
        D_{t|s}\\
        R_{t|s}
    \end{pmatrix} &= \exp \begin{pmatrix}
        \frac{1}{2}\int_{s}^t A(t) &  \int_{s}^t c(t) \\
        0 & - \frac{1}{2} \int_{s}^t A^\top(t) 
    \end{pmatrix}
    \begin{pmatrix}
        D_s\\
        R_s
    \end{pmatrix}
\end{align}
where $D_s = \mby_s$ and $R_s = I$.

\section{Regularity assumptions}\label{sec:regularity_assumptions}
In this section, we list a set of assumptions on $f,g$ and $\qdata$ which we assume throughout the paper:
\begin{itemize}
    \item[(A1)] $\qdata(\mby_t)$ is twice differentiable for all $t$, $\qdata \in C^{2}(\mbR^d)$.
    \item[(A2)] The drift $f(t, \mby)$ and diffusion coefficient $g(t)$ satisfy:
    \begin{itemize}
        \item $f \in C^{2}(\mbR^d, \mbR_{+})$, and $f$ is Lipschitz in the $\mby$ argument
        \item $f,g$ are integrable with respect to $\qdata$
    \end{itemize}
\end{itemize}
Both A1-A2 imply that $q(\mby_t)$ exists and $q(\mby_t)$ is twice differentiable, see \citet{haussmann1986time}. 

\section{Error Estimate}\label{sec:error_estimate}
In this section we prove that for any $t \in (0, T]$, the gap between the true marginal $q(\mby_t)$ and the locally linear approximation $\widehat{q}(\mby_t) = \E_{q(\mby_s)} [\widehat{q}(\mby_t \mid \mby_s)]$ is upper bounded by the difference of the drifts between the interval $(s(t), t)$. 
\begin{lemma}\label{lemma:error_estimate}
    For $t \in (0, T]$, we assume that $gg^{T}(t) = g^2(t) \mbI_d$, where $g^2(t)$ is a scalar, and $f,g, \qdata$ satisfy smoothness assumptions in \cref{sec:regularity_assumptions}. For any $t \in (0, T]$, we have:
    \begin{align}\nonumber
        & \kl{q(\mby_t) \mid \widehat{q}(\mby_t)} \leq  \int_{s(t)}^t \E_{q(\mby_\tau)} \left[ \frac{1}{2g^2} \norm{f(\tau, \mby_\tau) - \cT_s f(t, \mby_\tau)}_2^2 \right] d\tau
    \end{align}
    where $\cT_s$ is the linearization operator and $q(\mby_t \mid \mby_s)$ is the exact transition kernel.
\end{lemma}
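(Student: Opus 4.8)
The plan is to lift the bound to path space and apply Girsanov's theorem. Fix $t \in (0,T]$ and write $s = s(t)$ (so $s = 0$ when $t \le t_{\text{min}}$); by assumptions (A1)--(A2) in \cref{sec:regularity_assumptions} and \citet{haussmann1986time}, all the marginals and transition kernels appearing below exist and are absolutely continuous, so every \textsc{KL} term is well defined. The first step is to pass to a \emph{conditional} divergence: the true joint law $q(\mby_t \mid \mby_s)\,q(\mby_s)$ and the approximate joint law $\widehat{q}(\mby_t \mid \mby_s)\,q(\mby_s)$ share the same $\mby_s$-marginal, so combining the chain rule for \textsc{KL} with the data-processing inequality (marginalizing out $\mby_s$) gives
\begin{align*}
    \kl{q(\mby_t) \mid \widehat{q}(\mby_t)} \;\le\; \E_{q(\mby_s)}\left[\kl{q(\mby_t \mid \mby_s) \mid \widehat{q}(\mby_t \mid \mby_s)}\right].
\end{align*}

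Next I would bound each conditional divergence by a path-space relative entropy. For fixed $\mby_s$, the laws $q(\mby_t \mid \mby_s)$ and $\widehat{q}(\mby_t \mid \mby_s)$ are the time-$t$ marginals of two diffusions on $[s,t]$ started deterministically at $\mby_s$, driven by the same noise $g(\tau)\,d\mbw_\tau$, with drifts $f$ and $\cT_s f$ respectively; note the linearized drift $\cT_s f$ is affine in the state, with coefficients fixed by $\mby_s$. Call these path laws $\mathbb{Q}_{\mby_s}$ and $\widehat{\mathbb{Q}}_{\mby_s}$ on $C([s,t],\mbR^d)$. A second application of the data-processing inequality (to the endpoint-evaluation map) gives $\kl{q(\mby_t \mid \mby_s) \mid \widehat{q}(\mby_t \mid \mby_s)} \le \kl{\mathbb{Q}_{\mby_s} \mid \widehat{\mathbb{Q}}_{\mby_s}}$, and since the two path measures have the common scalar diffusion coefficient $gg^\top(\tau) = g^2(\tau)\mbI_d$, Girsanov's theorem yields the standard formula for the relative entropy of diffusion path measures,
\begin{align*}
    \kl{\mathbb{Q}_{\mby_s} \mid \widehat{\mathbb{Q}}_{\mby_s}} = \int_s^t \E_{q(\mby_\tau \mid \mby_s)}\left[\frac{1}{2g^2(\tau)}\norm{f(\tau, \mby_\tau) - \cT_s f(\tau, \mby_\tau)}_2^2\right] d\tau,
\end{align*}
the Itô-integral part of $\log d\mathbb{Q}_{\mby_s}/d\widehat{\mathbb{Q}}_{\mby_s}$ having zero mean under $\mathbb{Q}_{\mby_s}$. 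Plugging this into the previous display and using the tower property $\E_{q(\mby_s)}\E_{q(\mby_\tau \mid \mby_s)}[\,\cdot\,] = \E_{q(\mby_\tau)}[\,\cdot\,]$ for $\tau \ge s$ (the Markov property, together with Fubini since the integrand is nonnegative) gives exactly the claimed bound.

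The hard part will be justifying the change of measure rather than the computation itself: one must check that the Dol\'eans--Dade exponential defining $d\mathbb{Q}_{\mby_s}/d\widehat{\mathbb{Q}}_{\mby_s}$ is a true martingale, so that it has unit mean and the stochastic-integral term integrates to zero. This is where the regularity hypotheses enter --- $f \in C^2$ and Lipschitz in $\mby$, together with the linear growth of the affine approximant $\cT_s f$, give a Novikov-type condition on compact time intervals. If a global Novikov bound is unavailable, the standard fix is to localize with stopping times $\tau_n \uparrow t$, establish the identity on each $[s,\tau_n]$, and pass to the limit by monotone convergence on the right-hand side (the integrands being nonnegative). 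A more routine point, also handled by (A1)--(A2) and \citet{haussmann1986time}, is the existence and smoothness of the densities $q(\mby_\tau)$, $q(\mby_\tau \mid \mby_s)$ and $\widehat{q}(\mby_t \mid \mby_s)$ that make all the divergences finite and the data-processing steps legitimate.
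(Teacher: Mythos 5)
Your argument is correct, and its first step (reducing $\kl{q(\mby_t) \mid \widehat{q}(\mby_t)}$ to the $\mby_s$-averaged conditional divergence via the chain rule and data processing) is exactly the paper's first step, which invokes Jensen's inequality and convexity of $f$-divergences. Where you diverge is in bounding $\kl{q(\mby_t \mid \mby_s) \mid \widehat{q}(\mby_t \mid \mby_s)}$: you pass to path space and use Girsanov's theorem, so the drift-difference integral $\int_s^t \E_{q(\mby_\tau \mid \mby_s)}\bigl[\tfrac{1}{2g^2}\norm{f - \cT_s f}_2^2\bigr]d\tau$ appears as the \emph{exact} relative entropy of the two path measures, and the inequality is incurred only when projecting to the endpoint. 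The paper instead stays at the level of marginal densities (\cref{prop:kl_upper_bound}, following Lemma 2.21 of \citet{albergo2023stochastic}): it differentiates the \textsc{KL} divergence in time using the continuity equations with probability-flow velocities $F = f - \tfrac{1}{2}g^2\nabla\log q$, obtains the exact identity $\tfrac{d}{dt}\KL = \int (s_q - s_{\widehat{q}})^\top(F-\widehat{F})\,q\,d\mby$, and then applies Young's inequality with weight $\eta = g^2$ so that the cross term's score contribution is exactly cancelled by the negative $-\tfrac{g^2}{2}\norm{s_q - s_{\widehat{q}}}_2^2$ term, leaving the same drift-difference bound. The trade-off is the one you anticipate: your route needs absolute continuity of the path laws and a Novikov-type or localization argument to make the Dol\'eans--Dade exponential a true martingale (the Lipschitz drift and affine linearization make this routine on $[s,t]$), whereas the paper's route needs smoothness and integrability of the marginal densities and scores to justify differentiating under the integral and the integrations by parts, which is what the regularity assumptions in \cref{sec:regularity_assumptions} are for. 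The paper's derivation also makes visible exactly which nonnegative term is discarded to get the inequality, while yours makes visible that the bound is a path-space divergence; both observations can be useful for sharpening the estimate. Either way the final bound is identical, so your proposal stands as a valid alternative proof.
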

The main idea behind the proof is the following:
\begin{itemize}
    \item Due to Jensen's inequality and convexity of $f$-divergences (see theorem 4.1 in \citet{wu2017lecture}), we have:
    \begin{align}
        \kl{q(\mby_t), \widehat{q}(\mby_t)} \leq \E_{q(\mby_s)} \kl{q(\mby_t \mid \mby_s), \widehat{q}(\mby_t \mid \mby_s)}
    \end{align}
    \item Next, we upper bound $\kl{q(\mby_t \mid \mby_s), \widehat{q}(\mby_t \mid \mby_s)}$ using \cref{prop:kl_upper_bound}.
\end{itemize}

\begin{proposition}[Lemma 2.21 in \citet{albergo2023stochastic}]\label{prop:kl_upper_bound}
    Suppose $q, \widehat{q}$ evolve as follows:
    \begin{align}
        \partial_t q + \nabla \cdot (F q) &= 0 \\
        \partial_t \widehat{q} + \nabla \cdot (\widehat{F} \widehat{q}) &= 0 , \\
    \end{align}
    where $F = f - \frac{1}{2}g^2 \nabla \log q$, then the \gls{kl} divergence between $q, \widehat{q}$ can be expressed as:
    \begin{align}
        \kl{q(\mby_t \mid \mby_s), \widehat{q}(\mby_t \mid \mby_s)} &= 
        \int_{s}^t \int_{\mbR^d} \Big(s_q - s_{\widehat{q}}\Big)^\top \Big(F - \widehat{F}\Big) q(\mby_t \mid \mby_s)d\mby_t dt
    \end{align}
    which implies 
\begin{align}
    \kl{q(\mby_t \mid \mby_s), \widehat{q}(\mby_t \mid \mby_s)} \leq \int_s^t \E_{q(\mby_\tau \mid \mby_s)} \left[ \frac{1}{2g^2} \norm{f - \widehat{f}}_2^2 \right] d\tau
\end{align}    
\end{proposition}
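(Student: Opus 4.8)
The plan is to differentiate the relative entropy $\kl{q_\tau \mid \widehat q_\tau}$ along the pair of continuity equations, integrate the resulting identity over $\tau \in [s,t]$ to obtain the stated equality, and then apply a Young-type inequality to the integrand to obtain the bound. Here and below I abbreviate $q_\tau = q(\mby_\tau \mid \mby_s)$ and $\widehat q_\tau = \widehat q(\mby_\tau\mid\mby_s)$, write $s_q = \nabla\log q_\tau$ and $s_{\widehat q} = \nabla\log\widehat q_\tau$, and use that both probability-flow velocity fields share the same diffusion coefficient, i.e.\ $F = f - \tfrac12 g^2 s_q$ and $\widehat F = \widehat f - \tfrac12 g^2 s_{\widehat q}$.

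First I would compute, by differentiating under the integral sign,
\begin{align}
\frac{d}{d\tau}\kl{q_\tau\mid\widehat q_\tau} = \int (\partial_\tau q_\tau)\log\frac{q_\tau}{\widehat q_\tau}\,d\mby + \int \partial_\tau q_\tau\,d\mby - \int q_\tau\,\frac{\partial_\tau \widehat q_\tau}{\widehat q_\tau}\,d\mby .
\end{align}
The second term vanishes since $\int q_\tau\,d\mby \equiv 1$. Substituting $\partial_\tau q_\tau = -\nabla\cdot(Fq_\tau)$ and $\partial_\tau\widehat q_\tau = -\nabla\cdot(\widehat F\widehat q_\tau)$ and integrating by parts (the boundary terms vanish under the decay/integrability hypotheses of \cref{sec:regularity_assumptions}), the first term becomes $\int q_\tau F\cdot\nabla\log(q_\tau/\widehat q_\tau) = \int q_\tau F\cdot(s_q - s_{\widehat q})$. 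For the third term I use the identity $\nabla(q_\tau/\widehat q_\tau) = (q_\tau/\widehat q_\tau)(s_q - s_{\widehat q})$, so it equals $-\int \widehat F\widehat q_\tau\cdot(q_\tau/\widehat q_\tau)(s_q - s_{\widehat q}) = -\int q_\tau\,\widehat F\cdot(s_q - s_{\widehat q})$. Adding the two,
\begin{align}
\frac{d}{d\tau}\kl{q_\tau\mid\widehat q_\tau} = \int_{\mbR^d}(s_q - s_{\widehat q})^\top(F - \widehat F)\,q_\tau\,d\mby .
\end{align}
Since $q_\tau$ and $\widehat q_\tau$ solve continuity equations with the common initial datum $\delta_{\mby_s}$, we have $\kl{q_s\mid\widehat q_s} = 0$, and integrating from $s$ to $t$ gives the claimed identity.

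For the inequality I would substitute $F - \widehat F = (f - \widehat f) - \tfrac12 g^2(s_q - s_{\widehat q})$ into the integrand:
\begin{align}
(s_q - s_{\widehat q})^\top(F - \widehat F) = (s_q - s_{\widehat q})^\top(f - \widehat f) - \tfrac12 g^2\norm{s_q - s_{\widehat q}}_2^2 ,
\end{align}
and then bound $(s_q - s_{\widehat q})^\top(f - \widehat f) \le \tfrac12 g^2\norm{s_q - s_{\widehat q}}_2^2 + \tfrac{1}{2g^2}\norm{f - \widehat f}_2^2$ by Young's inequality. The terms quadratic in $s_q - s_{\widehat q}$ cancel, leaving $(s_q - s_{\widehat q})^\top(F - \widehat F) \le \tfrac{1}{2g^2}\norm{f - \widehat f}_2^2$ pointwise; integrating this against $q_\tau$ and over $\tau\in[s,t]$ yields $\kl{q_t\mid\widehat q_t} \le \int_s^t \E_{q(\mby_\tau\mid\mby_s)}\big[\tfrac{1}{2g^2}\norm{f - \widehat f}_2^2\big]\,d\tau$.

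The main obstacle is purely analytic: justifying the differentiation under the integral sign, the vanishing of boundary terms in the two integrations by parts, and the interpretation of $\kl{q_s\mid\widehat q_s} = 0$ given the degenerate (Dirac) initial condition — formally one works at positive times, where the densities are smooth and strictly positive by the assumptions of \cref{sec:regularity_assumptions}, and passes to the limit $\tau\downarrow s$. The algebraic heart of the argument — the cancellation that exposes $F - \widehat F$, and the single Young step — is elementary.
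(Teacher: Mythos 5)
Your proposal is correct and follows essentially the same route as the paper's proof: differentiate the \gls{kl} divergence along the two continuity equations, integrate by parts to obtain $\frac{d}{d\tau}\KL = \int (s_q - s_{\widehat{q}})^\top(F - \widehat{F})\,q\,d\mby$, and then cancel the quadratic score term via Young's inequality with weight $g^2$ (the paper's choice $\eta = g^2$) to leave only $\frac{1}{2g^2}\norm{f - \widehat{f}}_2^2$. The only differences are cosmetic — you apply Young's inequality pointwise before integrating and explicitly flag the analytic technicalities (boundary terms, the Dirac initial condition), which the paper glosses over.
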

\begin{proof}
    \gls{kl} divergence evolves as:
    \begin{align}\nonumber
        \frac{d}{dt} \kl{q(\mby_t \mid \mby_s), \widehat{q}(\mby_t \mid \mby_s)}  &= 
        \frac{d}{dt} \int \log \frac{q}{\widehat{q}} q d\mby \\ \nonumber
        &= -\frac{d}{dt} \int q \log \widehat{q} d\mby + \frac{d}{dt} \int q \log q d\mby  \\ \nonumber
        &= - \int \partial_t ( q \log \widehat{q}) d\mby +  \int \partial_t (q \log q) d\mby  \\ \nonumber
        &= - \int \left(q \partial_t \log \widehat{q} + \log \widehat{q} \partial_t q \right) d\mby +  \int \left(q \partial_t \log {q} + \log {q} \partial_t q \right) d\mby \\ \nonumber
        &= -\int \left(\frac{q}{\widehat{q}} \partial_t \widehat{q} + \log \widehat{q} \partial_t q \right) d\mby +  \int \left( \partial_t {q} + \log {q} \partial_t q \right) d\mby \\ \nonumber
        &= -\int \left(\frac{q}{\widehat{q}} \partial_t \widehat{q} + \log \frac{\widehat{q}}{q} \partial_t q \right) d\mby +  \int \left( \partial_t {q}  \right) d\mby \\ \nonumber        
        &= - \int \frac{q}{\widehat{q}} \partial_t \widehat{q} d\mby + \int \log \frac{q}{\widehat{q}} \partial_t q d\mby, \qquad \text{since } \partial_t \int q d\mby = 0 \\ \nonumber
        &=  - \int \frac{q}{\widehat{q}} \nabla \cdot (-\widehat{F}\widehat{q}) d\mby + \int \log \frac{q}{\widehat{q}} \nabla \cdot (-Fq) d\mby \\ \nonumber
        &= \int \nabla \Big(\frac{q}{\widehat{q}}\Big)^\top (\widehat{F}\widehat{q}) d\mby - \int \Big(\nabla \log q - \nabla \log \widehat{q}  \Big)^\top(Fq) d\mby \\ \nonumber
        &= \int \Big(\frac{q}{\widehat{q}}\Big) \frac{1}{\Big(\frac{q}{\widehat{q}}\Big)} \nabla \Big(\frac{q}{\widehat{q}}\Big)^\top (\widehat{F}\widehat{q}) d\mby - \int \Big(\nabla \log q - \nabla \log \widehat{q}  \Big)^\top(Fq) d\mby \\ \nonumber
        &= \int \Big(\frac{q}{\widehat{q}}\Big) \nabla \log \Big(\frac{q}{\widehat{q}}\Big)^\top (\widehat{F}\widehat{q}) d\mby - \int \Big(\nabla \log q - \nabla \log \widehat{q}  \Big)^\top(F) d\mby \\ \nonumber
        &= \int \left(\nabla \log q -\nabla \log \widehat{q} \right)^\top \widehat{F}q d\mby - \int \Big(\nabla \log q - \nabla \log \widehat{q}  \Big)^\top(Fq) d\mby \\ \nonumber
        &= \int \left(\nabla \log q - \nabla \log \widehat{q}  \right)^\top \left(F - \widehat{F} \right) q d\mby \\ \label{eq:kl_equality}
        &= \int \left(s_q - s_{\widehat{q}} \right)^\top \left(F - \widehat{F} \right) q d\mby
    \end{align}

Then we bound the \gls{kl} divergence between $q, \widehat{q}$ using \cref{eq:kl_equality} we get:
\begin{align}\nonumber
    \kl{q(\mby_t \mid \mby_s), \widehat{q}(\mby_t \mid \mby_s)} &= \int_{s}^t \int_{\mbR^d} \Big(s_q - s_{\widehat{q}}\Big)^\top \Big([f - \frac{g^2}{2} s_q] - [\widehat{f} - \frac{g^2}{2} s_{\widehat{q}}]\Big) q(\mby_t \mid \mby_s)d\mby_t dt \\ \nonumber
    &= \int_{s}^t \int_{\mbR^d} \Big(s_q - s_{\widehat{q}}\Big)^\top \Big(f - \widehat{f}\Big) q(\mby_t \mid \mby_s)d\mby_t dt \\ \nonumber
    & \qquad - \int_{s}^t \int_{\mbR^d} \Big(s_q - s_{\widehat{q}}\Big)^\top \frac{g^2}{2} \Big(s_q - s_{\widehat{q}}\Big) q(\mby_t \mid \mby_s)d\mby_t dt\\ \label{eq:eta_equation}
    &= \int_{s}^t \int_{\mbR^d} \Big(s_q - s_{\widehat{q}}\Big)^\top \Big(f - \widehat{f}\Big) q(\mby_t \mid \mby_s)d\mby_t dt \\ \nonumber
    & \qquad - \int_{s}^t \int_{\mbR^d} \frac{g^2}{2} \norm{s_q - s_{\widehat{q}}}_2^2 q(\mby_t \mid \mby_s)d\mby_t dt
\end{align}

Now, to upper bound the first integral, we use the fact that for any vectors $\mba, \mbb \in \mbR^d$
\begin{align*}
    \norm{\mba - \mbb}_2^2 &\geq 0 \\
    \norm{\mba}_2^2 + \norm{\mbb}_2^2 - 2 \mba^\top \mbb &\geq 0 \\
    \mba^\top \mbb &\leq \frac{1}{2} \left(\norm{\mba}_2^2 + \norm{\mbb}_2^2 \right)
\end{align*}
which implies that for $\eta > 0$ and $\mba = \frac{1}{\sqrt{\eta}} (f - \widehat{f})$ and $\mbb = \sqrt{\eta} (s_q - s_{\widehat{q}})$, we get:
\begin{align}\label{eq:upper_bound_score_f}
   \int_{s}^t \int_{\mbR^d} \Big(s_q - s_{\widehat{q}}\Big)^\top \Big(f - \widehat{f}\Big) q(\mby_t \mid \mby_s)d\mby_t dt &\leq \int_s^t \int_{\mbR^d} \left( \frac{\eta}{2} \norm{s_q - s_{\widehat{q}}}_2^2 + \frac{1}{2\eta} \norm{f - \widehat{f}}_2^2 \right) q(\mby_t \mid \mby_s) d\mby_t dt
\end{align}
now, using \cref{eq:upper_bound_score_f} and setting $\eta = g^2$ in \cref{eq:eta_equation}, we get:
\begin{align}\nonumber
    \kl{q(\mby_t \mid \mby_s), \widehat{q}(\mby_t \mid \mby_s)} &\leq 
     \int_s^t \int_{\mbR^d} \frac{1}{2g^2} \norm{f - \widehat{f}}_2^2 q(\mby_t \mid \mby_s) d\mby_t dt \\ \label{eq:kl_upper_bound}
     &= \int_s^t \E_{q(\mby_t \mid \mby_s)} \left[ \frac{1}{2g^2} \norm{f - \widehat{f}}_2^2 \right] dt
\end{align}
\end{proof}

We note that due to Jensen's inequality (see theorem 4.1 in \citet{wu2017lecture}):
\begin{align}
    \kl{q(\mby_t), \widehat{q}(\mby_t)} \leq \E_{q(\mby_s)} \kl{q(\mby_t \mid \mby_s), \widehat{q}(\mby_t \mid \mby_s)}
\end{align}
which combined with \cref{eq:kl_upper_bound} gets:
\begin{align}
        \kl{q(\mby_t), \widehat{q}(\mby_t)} &\leq 
      \E_{q(\mby_s)} \int_s^t \int_{\mbR^d} \frac{1}{2g^2} \norm{f - \widehat{f}}_2^2 q(\mby_\tau \mid \mby_s) d\mby_\tau dt \\ \label{eq:kl_bound_pre}
     &= \int_s^t \E_{q(\mby_s)} \E_{q(\mby_\tau \mid \mby_s)} \left[ \frac{1}{2g^2} \norm{f - \widehat{f}}_2^2 \right] d\tau \\
     &= \int_s^t \E_{q(\mby_\tau)} \left[ \frac{1}{2g^2} \norm{f - \widehat{f}}_2^2 \right] d\tau
\end{align}

\section{Taylor Series tricks.}\label{sec:taylor_appendix}
\subsection{Time-dependent $s(t)$}
Now, we find a function $s(t)$ such that for $t > t_\text{min}$, we get the following:
\begin{align}
    \int_{s(t)}^t \beta(\tau) d\tau
\end{align}
We first derive it for a linear $\beta(t)$ followed by $\beta(t)$ used to derive the linear and cosine noise schedules \citep{chen2023importance}.

\paragraph{Linear $\beta(t)$.}
For a linear $\beta(t)$ function, we let $s(t) = t - \epsilon(t)$
\begin{align}
    \int_{t-\epsilon(t)}^t \beta(s) ds &= [\beta_{min}t + \beta_{max}\frac{t^2}{2}]_{t-\epsilon(t)}^t \\
    &= \beta_{min}(\epsilon(t)) + \beta_{max}\frac{1}{2}(t^2 - (t- \epsilon(t))^2) \\
    &= \beta_{min} \epsilon(t) + \frac{\beta_{max}}{2} \epsilon(t) (2t - \epsilon(t)) \\
    &= \beta_{min} \epsilon(t) + \frac{\beta_{max}}{2} (2t \epsilon(t)  - \epsilon(t)^2) \\
\end{align}
Suppose if we choose $\epsilon(t)$ such that $\int_{t-\epsilon(t)}^t \beta(s) ds = \lambda$, then note that
\begin{align}
    \lambda &= \int_{t-\epsilon(t)}^t \beta(s) ds \\
    &= \beta_{min} \epsilon(t) + \frac{\beta_{max}}{2} (2t \epsilon(t)  - \epsilon(t)^2) 
\end{align}
now, to find $\epsilon(t)$ we define a polynomial:
\begin{align}
    P(x) &= \beta_{min} x + \frac{\beta_{max}}{2} (2t x  - x^2) - \lambda     \\
    &= -\frac{\beta_{max}}{2} x^2 + (\beta_{min} + \beta_{max} t) x - \lambda
\end{align}
then $\epsilon(t)$ is a zero of the polynomial $P(x)$. We can find the zeros of $P(x)$:
\begin{align}
    x^* &= \frac{-(\beta_{min} + \beta_{max}t) \pm \sqrt{(\beta_{min} + \beta_{max}t)^2 - 4\lambda \frac{\beta_{max}}{2}}}{-\beta_{max}} \\
    &= \frac{-(\beta_{min} + \beta_{max}t) \pm \sqrt{(\beta_{min} + \beta_{max}t)^2 - 2\lambda \beta_{max}}}{-\beta_{max}} \\
    &= \frac{(\beta_{min} + \beta_{max}t) \pm \sqrt{(\beta_{min} + \beta_{max}t)^2 - 2\lambda \beta_{max}}}{\beta_{max}} \\
    &= t + \frac{\beta_{min}}{\beta_{max}} \pm \frac{\sqrt{(\beta_{min} + \beta_{max}t)^2 - 2\lambda \beta_{max}}}{\beta_{max}} 
\end{align}
the constraint $0 < t - \epsilon(t) < t$, implies that:
\begin{align}
    {\beta_{min}} \pm {\sqrt{(\beta_{min} + \beta_{max}t)^2 - 2\lambda \beta_{max}}}  &< 0 \\
    {\beta_{min}} \pm {\sqrt{\beta(t)^2 - 2\lambda \beta_{max}}} &< 0\\
    {\beta(t)^2 - 2\lambda \beta_{max}} &< \beta_{min}^2 \\
    \beta(t) &< \sqrt{\beta_{min}^2 + 2\lambda \beta_{max}}    
\end{align}
and we require that $\beta(t)^2 - 2\lambda \beta_{max} > 0$ such that $\epsilon(t)$ is not complex-valued:
\begin{align}
    \beta(t)^2 - 2\lambda \beta_{max} &> 0 \\
    \beta(t) &> \sqrt{2\lambda \beta_{max}}
\end{align}
which implies that
\begin{align}
    \epsilon(t) = t + \frac{\beta_{min}- \sqrt{\beta(t)^2 - 2\lambda \beta_{max}}}{\beta_{max}}
\end{align}
for $t$ such that
\begin{align}
  \sqrt{2\lambda \beta_{max}} < \beta(t) < \sqrt{\beta_{min}^2 + 2\lambda \beta_{max}}    
\end{align}

\paragraph{Commonly used $\beta(t)$ functions.}
\citet{chen2023importance} studies the effect of different noise schedules $\gamma(t)$:
\begin{align}
    \mby_t = \sqrt{\gamma(t)} x + \sqrt{1 - \gamma(t)} \epsilon
\end{align}
with the following choices for $\gamma(t)$:
\begin{align}
    \text{cosine}: \gamma(t) &= \cos \left( \frac{\pi}{2} t \right) \\
    \text{linear}: \gamma(t) &= 1-t 
\end{align}
Now, note that for the \gls{vpsde} process, we have $\mby_t = m(t) x + \sigma(t) \epsilon$, where 
\begin{align}
    m(t) &= \exp\left(-\int_0^t \frac{1}{2} \beta(s) ds \right) = \sqrt{\exp\left(-\int_0^t \beta(s) ds \right)} \\
    \sigma(t) &=  \sqrt{1- \exp\left(-\int_0^t \beta(s) ds \right)}
\end{align}
which implies that 
\begin{align}
    \frac{d}{dt} \log m(t) &= -\frac{1}{2} \left(\beta(t) - \beta(0)\right) \\
    \frac{d}{dt} \log \sqrt{\gamma(t)} &= -\frac{1}{2} (\beta(t) - \beta(0)) \\
    \frac{1}{2} \frac{d}{dt} \log \gamma(t) &= -\frac{1}{2}(\beta(t) - \beta(0))\\
    \frac{d}{dt} \log \gamma(t) &= -(\beta(t) - \beta(0))\\ 
    \beta(t) &= \beta(0) - \frac{d}{dt} \log \gamma(t)
\end{align}
For the commonly used noise schedules, we can derive the $\beta(t)$ function:
\begin{align}
    \text{cosine}: \beta(t) &= \beta(0) - \frac{-\sin\left( \frac{\pi}{2} t \right)}{\cos \left( \frac{\pi}{2} t \right)} = \beta(0) + \tan(\frac{\pi}{2}t) \\
    \text{linear}: \beta(t) &= \beta(0) - \frac{-1}{1-t} = \beta(0) + \frac{1}{1-t} 
\end{align}
Now, note that we can find $s(t)$ such that $\int_{s(t)}^t \beta(\tau) d\tau = \lambda$ for a user-specified $\lambda$ and linear $\beta(t)$, as follows:
\begin{align}
    \lambda &= \int_{s(t)}^t \beta(\tau) d\tau  \\
    &= \int_{s(t)}^t \beta(0) + \frac{1}{1-\tau} d\tau \\
    &=\left[ -\log (1- \tau)  \right]_{s(t)}^t, \qquad \text{assuming } \beta(0) = 0 \\ 
    \exp(-\lambda) &= \frac{1 - t}{1-s(t)} \\
    1 - s(t) &= \frac{1 - t}{\exp(-\lambda)} \\
    s(t) &= 1 - \frac{1 - t}{\exp(-\lambda)}
\end{align}
Similarly for a cosine $\beta(t)$, we note that
\begin{align}
    \lambda &= \int_{s(t)}^t \beta(\tau) d\tau  \\
    &= \int_{s(t)}^t \beta(0) + \frac{1}{1-\tau} d\tau , \qquad \text{assume } \beta(0) = 0 \\
     &= \left[-\frac{2}{\pi} \log \cos(\frac{\pi}{2}\tau) \right]_{s(t)}^t \\
     &=-\frac{2}{\pi}\log \frac{\cos(\frac{\pi}{2}t)}{\cos(\frac{\pi}{2}s(t))} \\
     \exp(-\frac{\pi}{2}\lambda) &= \frac{\cos(\frac{\pi}{2}t)}{\cos(\frac{\pi}{2}s(t))} \\
     \cos(\frac{\pi}{2}s(t)) &= \frac{1}{\exp(-\frac{\pi}{2}\lambda)} \cos(\frac{\pi}{2}t)  \\
     \frac{\pi}{2} s(t) &= \cos^{-1}  \left( \frac{1}{\exp(-\frac{\pi}{2}\lambda)} \cos(\frac{\pi}{2}t)  \right) \\
     s(t) &= \frac{2}{\pi} \cos^{-1}  \left( \frac{1}{\exp(-\frac{\pi}{2}\lambda)} \cos(\frac{\pi}{2}t)  \right)
\end{align}

\section{Active Matter Experiments}
\subsection{Active Swimmer}\label{sec:active_swimmer}
In this section we plot the samples from the \gls{ism} trained model versus the inference process samples in \cref{fig:active_swimmer_ism_only}, and in \cref{fig:mmd} we compare the \textsc{mmd} between the model samples and the inference process samples at various times $t \in [0, T]$. The inference process is defined as
\begin{align}
    d x &= (-x^3 + v) dt \\ \label{eq:active_swimmer_app}
    d v &= -\gamma v dt + \sqrt{2\gamma D} d\mbw_t, \qquad t \in [0, T]
\end{align}
where $\gamma = 0.1, D=1.0$ and $T = 5.0$ with initial conditions $x_0, v_0 \sim \cN(0, 1)$. We generate samples from the score trained by the local \gls{dsm} and \gls{ism} objectives using the probability-flow \gls{ode}:
\begin{align}
    \frac{d}{dt} \mby_t = f(\mby_t, t) - \frac{1}{2}gg^\top s_\theta(\mby_t, t)
\end{align}
where $\mby = (x, v)^\top$. Note that when $s_\theta = \nabla_\mby \log q(\mby_t)$, then $q_\text{ODE} = q_\text{SDE}$, that is the distribution of the inference process and the \textsc{pf-ode} match at any time $t \in [0, T]$.

\begin{figure}[h]
\centering
\includegraphics[width=0.8\columnwidth]{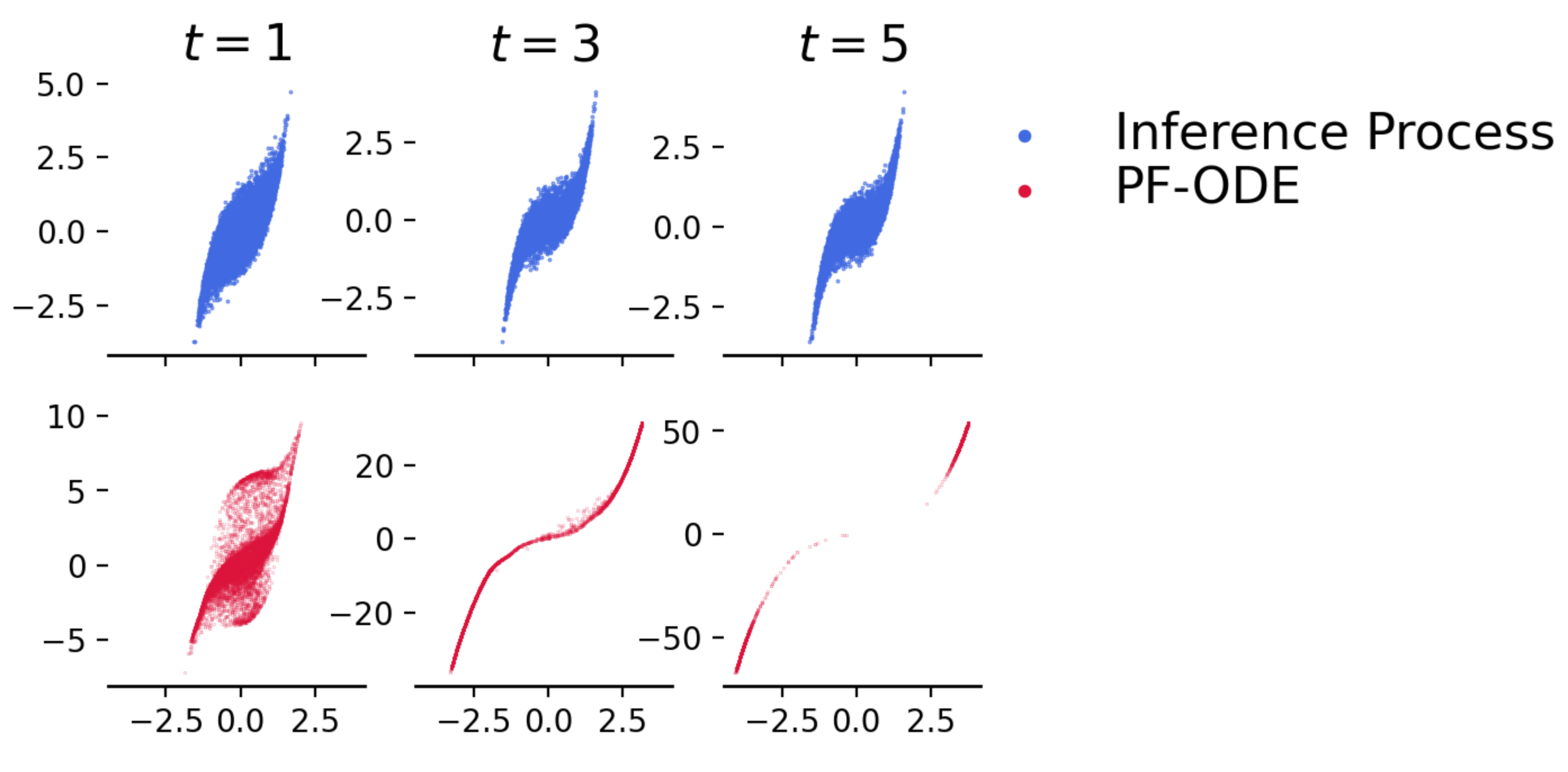}
\caption{\textbf{\gls{ism} Samples at $t \in \{1, 3, 5\}$}. Here we compare samples from the process defined in \cref{eq:active_swimmer_app} on the top panel and samples from \gls{ism} trained model on the bottom panel. The samples from the \textsc{pf-ode} start diverging and do not match the inference process' distribution.}
\label{fig:active_swimmer_ism_only}
\end{figure}
\begin{figure}[h]
    \centering
    \includegraphics[width=0.5\columnwidth]{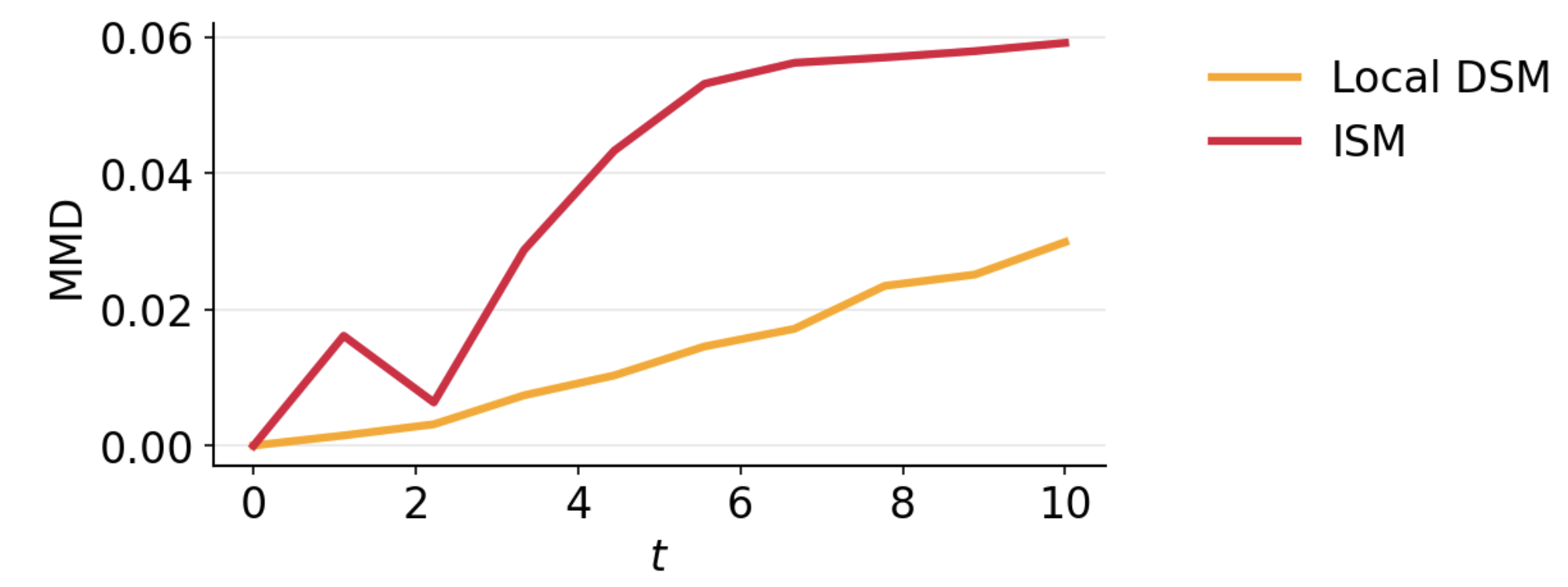}
    \caption{\textbf{MMD for $t \in [0, 10]$}}
    \label{fig:ips_mmd}
\end{figure}

\begin{figure}[h]
\centering
\includegraphics[width=0.5\columnwidth]{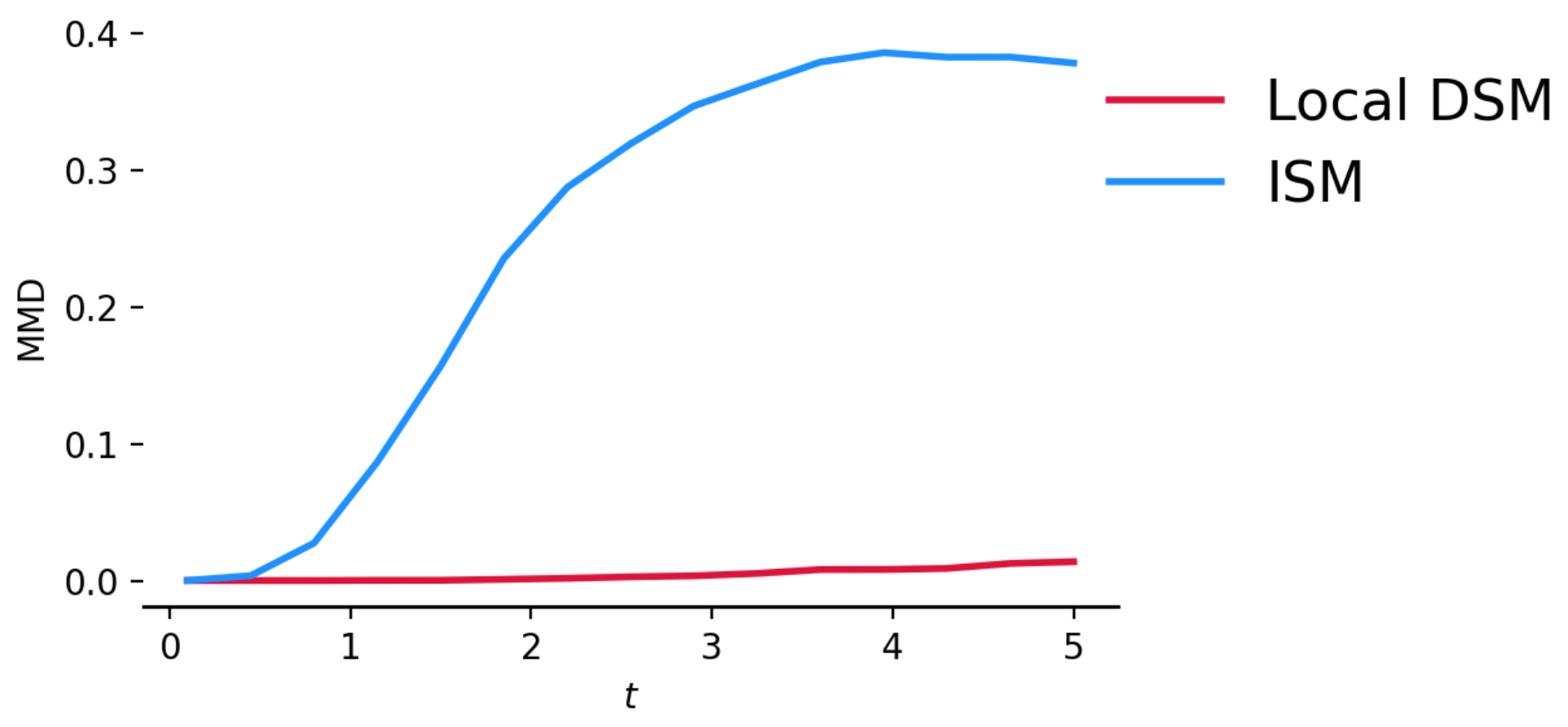}
\caption{Here we compare the \textsc{mmd} metric between model generated samples and the inference process samples at various time slices. We observe that both models have an increasing trend but the \gls{ism} model sample quality deteriorates rapidly compared to the local \gls{dsm} trained model.}
\label{fig:mmd}
\end{figure}

\subsection{Interacting Particle System}\label{sec:ips_appendix}
In this section we plot the \gls{mmd} between \gls{pf-ode} samples from the local \gls{dsm} and \gls{ism} trained model and the diffusion process, defined in \cref{eq:ips_sde}, samples between $t \in [0, 10]$. We note that for all $t \in [0, 10]$, the local \gls{dsm} trained models has a lower \gls{mmd}.

\section{\textsc{cifar10} Samples}\label{sec:cifar10_logistic}

\Cref{fig:cifar10_samples_181m_all} shows samples generated using models trained with the local \gls{dsm}, the perceptually-weighted local \gls{dsm}, introduced in \cref{eq:local_dsm_perceptual}, and the \gls{ism} objectives with the Logistic and \textsc{MoG} prior and  $\beta_t = \frac{1}{1 - t}$. For training these models, we use a 181 million parameter U-Net score model, adapted from \citep{nichol2021improved}. \Cref{fig:cifar10_samples_181m_all} shows that training with the local \gls{dsm} objectives produced more realistic samples compared to training with \gls{ism}. 

\begin{figure*}[ht]
\centering
\begin{subfigure}{0.5\textwidth}
    \hspace{\bibindent}\raisebox{\dimexpr 4.05cm+0.2\height}{(\textit{i}) MoG}\\
    \hspace{\bibindent}\raisebox{\dimexpr 2.55cm-0.\height}{(\textit{ii}) Logistic}
\end{subfigure}
\hspace{-9cm}
\begin{subfigure}{0.5\textwidth}
\centering
    \includegraphics[width=0.5\hsize]{plots/ism_181m_model.pdf}
    \includegraphics[width=0.5\hsize]{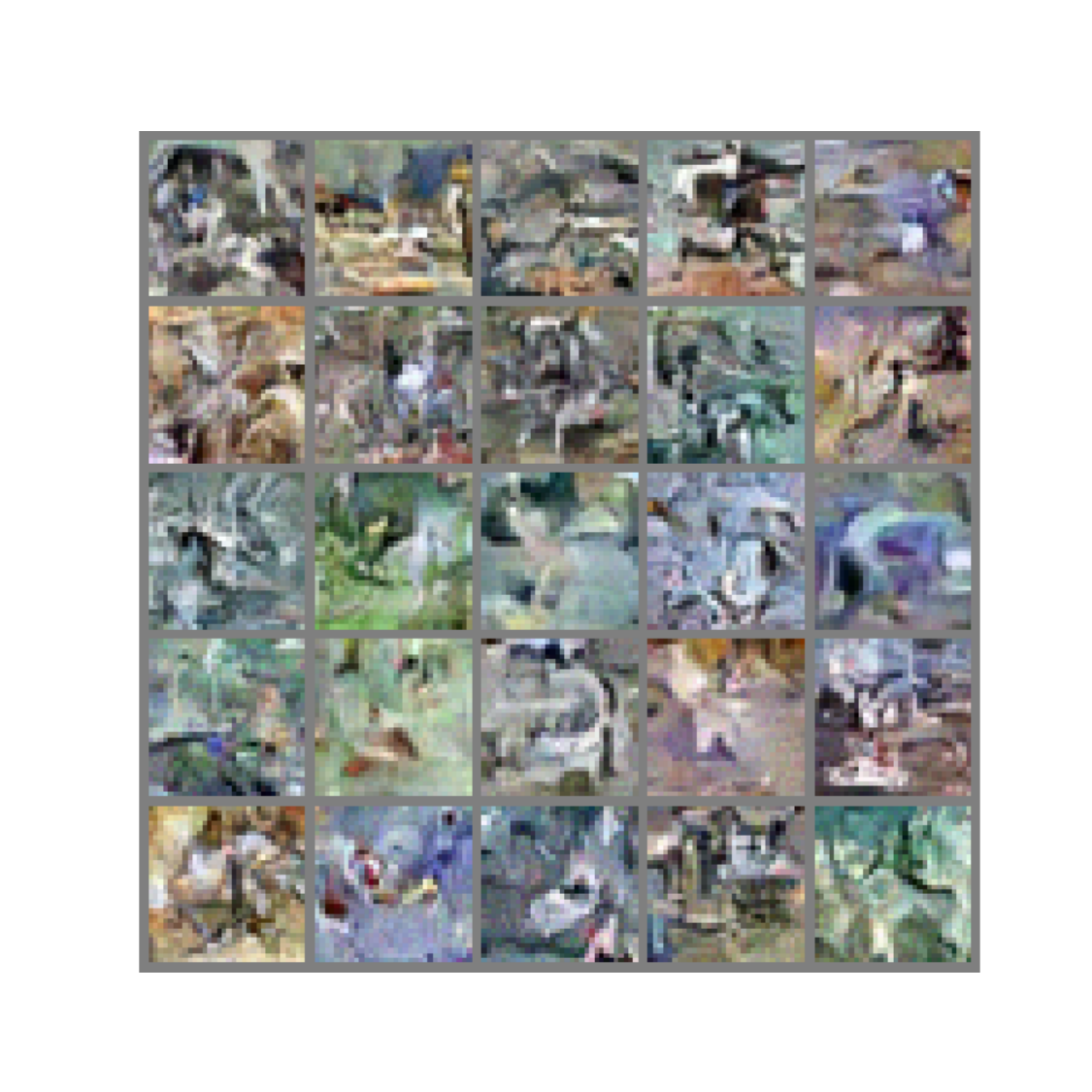}
    \caption{\gls{ism}}
\end{subfigure}
\hspace{-5cm}
\begin{subfigure}{0.5\textwidth}
\centering
    \includegraphics[width=0.5\hsize]{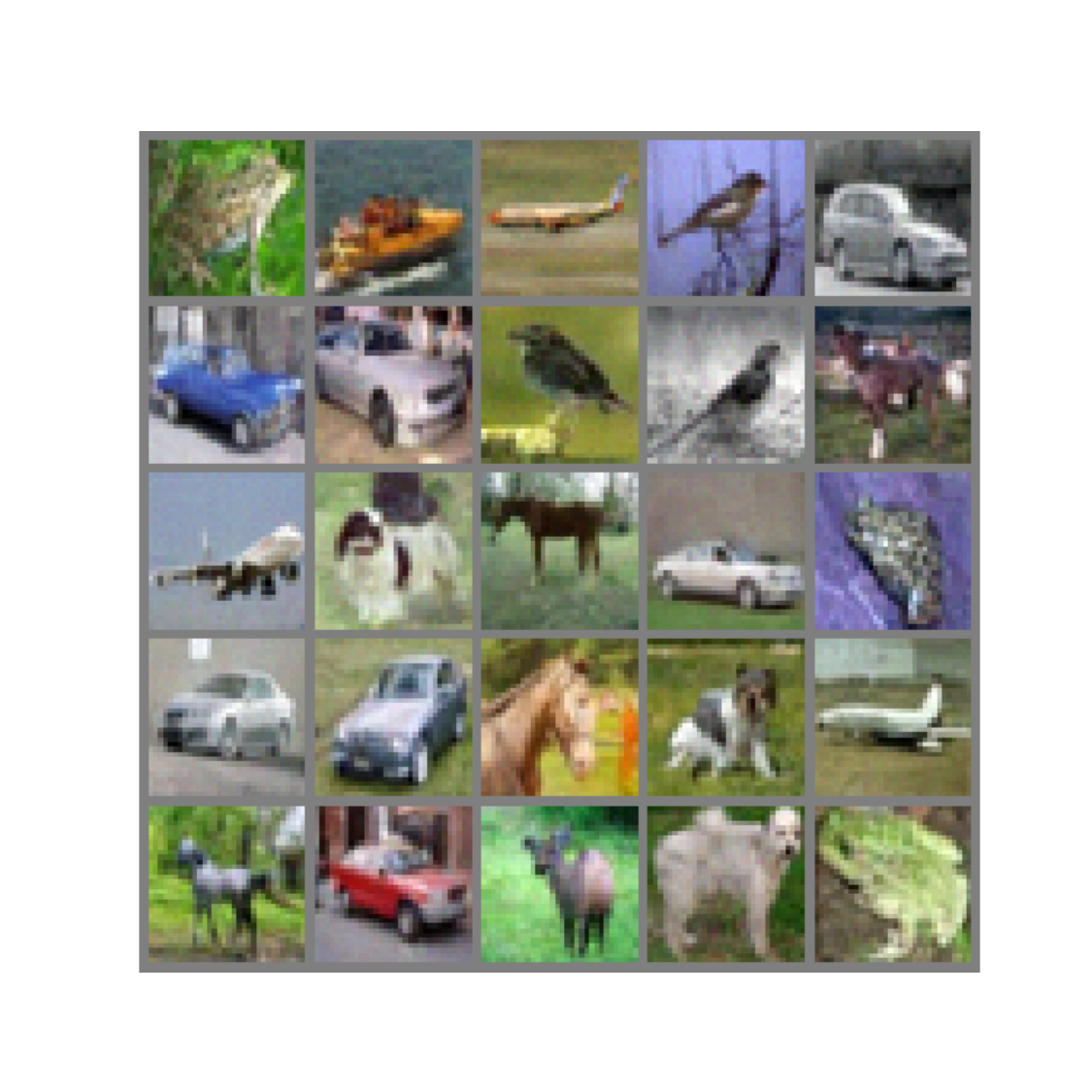}
    \includegraphics[width=0.5\hsize]{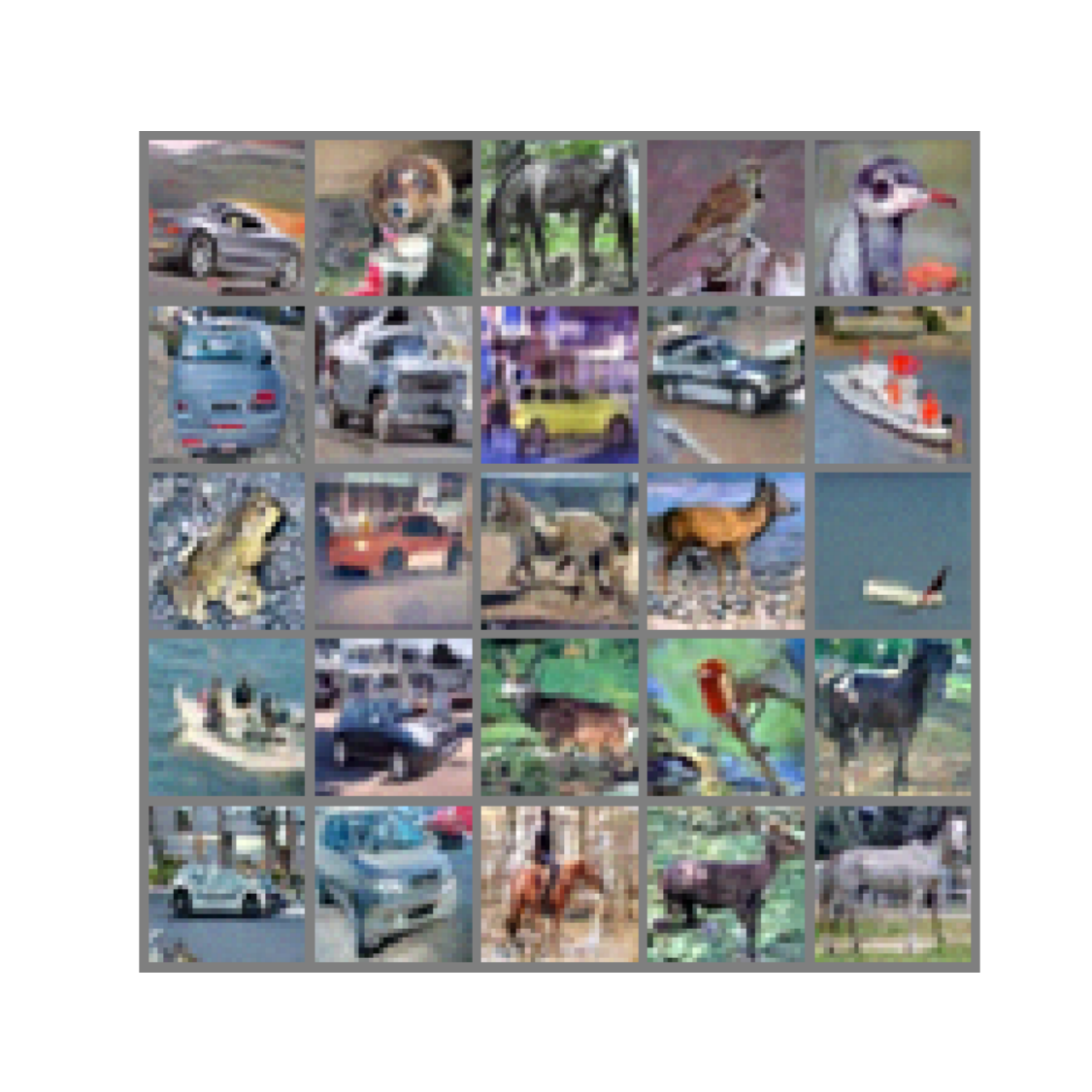}
    \caption{Local- \gls{dsm} (\textsc{pw})}
\end{subfigure}
\hspace{-5cm}
\begin{subfigure}{0.5\textwidth}
\centering
    \includegraphics[width=0.5\hsize]{plots/likelihood_gmm.pdf}
    \includegraphics[width=0.5\hsize]{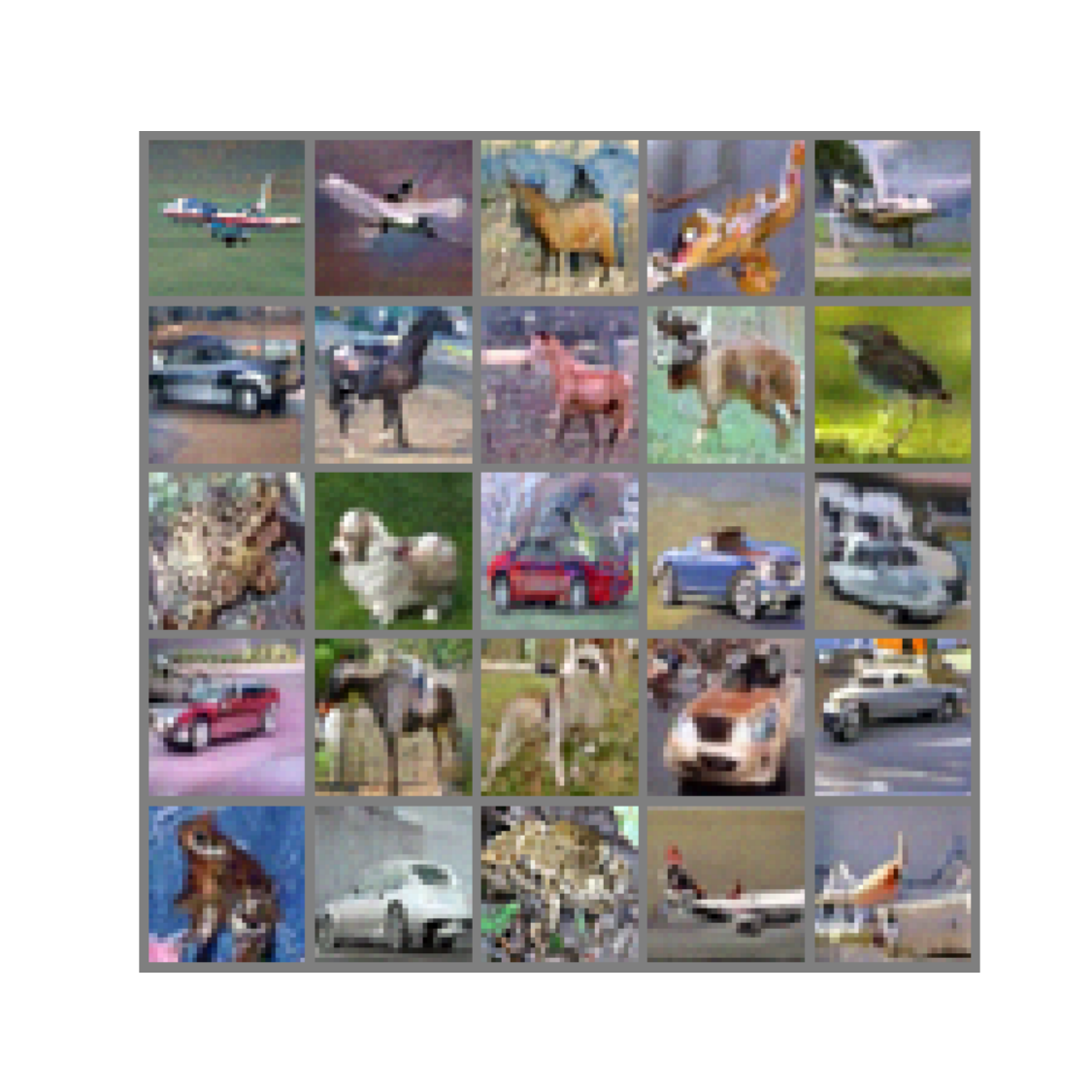}
    \caption{Local- \gls{dsm}}
\end{subfigure}
\caption{\textbf{\gls{ism} vs Local \gls{dsm} objectives.} 
Samples from \glspl{dbgm} trained with \gls{ism} and local-\gls{dsm} objectives using models with a \textsc{MoG} and a Logistic prior. We observe that the local-\gls{dsm} trained model samples are significantly better than the \gls{ism} trained model samples.
}
\label{fig:cifar10_samples_181m_all}
\end{figure*}

\end{document}